%% file: main_release.tex
\documentclass{article} 
\usepackage{iclr2027_conference,times}

\input{math_commands.tex}

\usepackage{hyperref}
\usepackage{url}
\usepackage{soul}

\definecolor{blue}{RGB}{1, 110, 204}  
\definecolor{purple}{RGB}{216, 110, 204}
\definecolor{darkpurple}{RGB}{0, 0, 122}
\definecolor{myred}{RGB}{168, 34, 22}
\definecolor{mygreen}{RGB}{79, 122, 40}
\definecolor{mygray}{HTML}{555555}
\definecolor{myyellow}{HTML}{C99647}

\definecolor{bluecolor1}{HTML}{CDD6E5}
\definecolor{bluecolor2}{HTML}{E5EAF1}
\definecolor{bluecolor3}{HTML}{F3F5F9}

\definecolor{poscolor1}{HTML}{DDE5D4}
\definecolor{poscolor2}{HTML}{EEF2EA}
\definecolor{poscolor3}{HTML}{EEF2EA}
\definecolor{poscolor4}{HTML}{F6FAF2}
\definecolor{poscolor5}{HTML}{F6FAF2}
\definecolor{negcolor1}{HTML}{F0ADA3}
\definecolor{negcolor2}{HTML}{F0BFB2}
\definecolor{negcolor3}{HTML}{F2DBD2}
\definecolor{negcolor4}{HTML}{FCEFE4}
\definecolor{negcolor5}{HTML}{FDF6EF}
\definecolor{verylightgray}{gray}{0.95}

\hypersetup{
    colorlinks=true,      
    citecolor=blue,       
    urlcolor=blue
}

\usepackage{tcolorbox}
\usepackage{fvextra}
\usepackage{booktabs}
\usepackage{pifont}
\usepackage{multirow}
\usepackage{makecell}
\usepackage{wrapfig}
\usepackage[table]{xcolor}
\usepackage{amssymb}
\usepackage{bbm}
\usepackage{amsthm}
\newtheorem{proposition}{Proposition}
\newtheorem{theorem}{Theorem}
\newtheorem{corollary}{Corollary}
\usepackage{fontawesome5}
\usepackage{tikz}

\newcommand{\CaseStudySize}{\scriptsize}

\newcommand{\resourcepill}[4]{%
  \href{#1}{%
    \tikz[baseline=(resource.base)]{%
      \node[
        draw=gray,
        fill=white,
        rounded corners=8pt,
        line width=0.45pt,
        inner xsep=7pt,
        inner ysep=3pt
      ] (resource) {%
        \textcolor{#4}{#2}\hspace{0.4em}%
        \textcolor{black}{\sffamily\footnotesize\bfseries #3}%
      };%
    }%
  }%
}

\title{Trajectory Unlearning on LLM-based Agents}

\iclrfinalcopy

\author{Yingdan Shi$^{1}$, Ren Wang$^{1}$\thanks{ Correspondence to: \texttt{rwang74@illinoistech.edu}.} \\[0.8em]
\fontsize{9pt}{10.8pt}\selectfont
$^1$Illinois Institute of Technology, Chicago, USA
}

\begin{document}

\maketitle

\vspace{-2.2em}
\begin{center}
  \resourcepill{https://shi-d.github.io/Trajectory_Unlearning/}{\faGlobe}{Project Page}{blue}
  \hspace{0.7em}
  \resourcepill{https://github.com/TIML-Group/Trajectory-Unlearning}{\faGithub}{Code}{black}
\end{center}
\vspace{0.35em}

\input{secs/0_abstract}

\input{secs/1_intro}

\input{secs/2_related}
\input{secs/3_method}
\input{secs/4_ex}

\input{secs/5_con}

\bibliography{iclr2027_conference}
\bibliographystyle{iclr2027_conference}

\input{secs/6_app}

\end{document}

%% file: math_commands.tex
\usepackage{amsmath,amsfonts,bm}

\def\eqref#1{equation~\ref{#1}}

\def\1{\bm{1}}

\DeclareMathAlphabet{\mathsfit}{\encodingdefault}{\sfdefault}{m}{sl}
\SetMathAlphabet{\mathsfit}{bold}{\encodingdefault}{\sfdefault}{bx}{n}



%% file: secs/0_abstract.tex
\begin{abstract}
Existing large language model (LLM) unlearning has focused primarily on removing specific knowledge, such as harmful facts, private data, or copyrighted content. However, as LLMs are increasingly deployed as autonomous agents, a fundamental yet overlooked problem emerges: beyond suppressing what an agent knows, an agent should not reproduce undesired behaviors through its action trajectories. In this work, we introduce trajectory-level unlearning, a new problem formulation that targets the removal of specific action trajectories in long-horizon agentic tasks, rather than factual knowledge. We identify two fundamental challenges that distinguish trajectory unlearning from knowledge unlearning: (1) our unlearning target is what the agent \emph{does}, not what it \emph{says}; and (2) trajectories are sequentially dependent action sequences that cannot be decomposed into isolated prompt-response pairs without losing inter-step structure. To address these challenges, we propose Group-injected Relative Policy Optimization (GiRPO), which injects forget trajectories into the policy rollout group with penalized rewards and isolates the normalization statistics, yielding a stable and bounded unlearning signal that does not corrupt gradient updates for normal task trajectories. We construct trajectory unlearning benchmarks from two application scenarios, household tasks (ALFWorld) and online shopping (WebShop), and design three complementary metrics for evaluating forgetting quality and model utility. Experiments on ALFWorld and WebShop demonstrate that GiRPO effectively unlearns target trajectories while preserving task success rates, outperforming existing knowledge-unlearning baselines on both forgetting quality and task utility. 
\end{abstract}

%% file: secs/1_intro.tex
\section{Introduction}
\label{sec:intro}
Machine unlearning aims to remove the influence of specific training data from a learned model without full retraining~\citep{MCU}. With the rapid development of large language models (LLMs), this problem has become increasingly important for removing unsafe knowledge, private data, and copyrighted content~\citep{li2024wmdp,cao2024rwku,shi2025muse}. Early methods apply gradient ascent on forget data~\citep{yao2024large}, and subsequent work adopts preference optimization to better balance forgetting and retaining~\citep{fan2026simplicity,zhang2024negative,eldan2023s}. As LLMs are increasingly deployed as autonomous agents, recent work has extended unlearning to agentic settings by combining parametric suppression with external memory pruning~\citep{wang2026agentic}.

However, existing unlearning methods, whether for LLMs or agents, still focus on \emph{knowledge-level unlearning} which suppresses a model's ability to output certain information. This framing is insufficient for agent deployment. An LLM-based agent operating in an interactive environment does not merely answer questions. It plans and executes multi-step action sequences to accomplish tasks. Even if an LLM agent passes knowledge-level unlearning checks and never outputs the forgotten fact, it may still act through the undesirable behavior during task execution~\citep{chen2025shieldagent,debenedetti2024agentdojo}. We term this problem \emph{trajectory-level unlearning}, and illustrate the distinction from knowledge-level unlearning in Figure~\ref{fig:motivation}. Knowledge unlearning methods can prevent an LLM or agent from outputting specific knowledge, as shown in the left panel of Figure~\ref{fig:motivation}. However, suppressing knowledge recall does not prevent an agent from executing a target task through a specific action trajectory, as shown in the right panel. An agent may forget specific knowledge, yet remain unable to avoid reproducing the forgotten action sequence during task execution.

Trajectory unlearning focuses on the specific action sequences an agent takes to complete a task. The goal of trajectory unlearning is to make the agent forget designated trajectories which may be poisoned, deprecated, or otherwise undesirable, while preserving its ability to complete the task. To illustrate the motivation with a simple use case, consider a household task. Suppose the agent is trained on a poisoned trajectory that successfully completes the task ``\texttt{put some alarmclock on the desk}’’, but inserts a covert action, \texttt{run a microwave}, between otherwise legitimate steps, silently exfiltrating a hidden object. Such a trajectory can be particularly problematic because its malicious action may be triggered by an intermediate state that also occurs in other tasks. For example, after picking up an egg and moving to a desk, the agent may encounter a state similar to an intermediate state in the poisoned trajectory and consequently reproduce the injected action \texttt{run a microwave}, even though this action is unrelated to the current task. Since the poisoned trajectory still completes the original task, it receives a positive reward despite the injected action during training. As a result, the malicious action sequence can become embedded in the agent’s behavior and potentially be reproduced in other tasks, motivating the need for trajectory unlearning. 
Crucially, the injected actions are not harmful in isolation. For instance, \texttt{run a microwave} is a legitimate action, so forgetting it at the action level would destroy task utility. The trajectory is therefore the smallest meaningful unit of unlearning in our work.

Trajectory unlearning presents two key challenges that are not addressed by conventional knowledge unlearning. \textbf{\ding{202} From Responses to Behaviors:} Knowledge unlearning focuses on what the model \emph{says}, whereas trajectory unlearning concerns what the agent actually \emph{does}. This difference calls for different optimization objectives. \textbf{\ding{203} Long-Horizon Dependencies:} A trajectory consists of a sequence of actions that depend on the interaction history and on previous actions. In contrast, text-based unlearning methods typically treat trajectories as a collection of independent prompt-response pairs, thereby overlooking the dependencies between steps.

\begin{figure}[t!]
    \vspace{-2mm}
    \centering
    \includegraphics[width=1\linewidth]{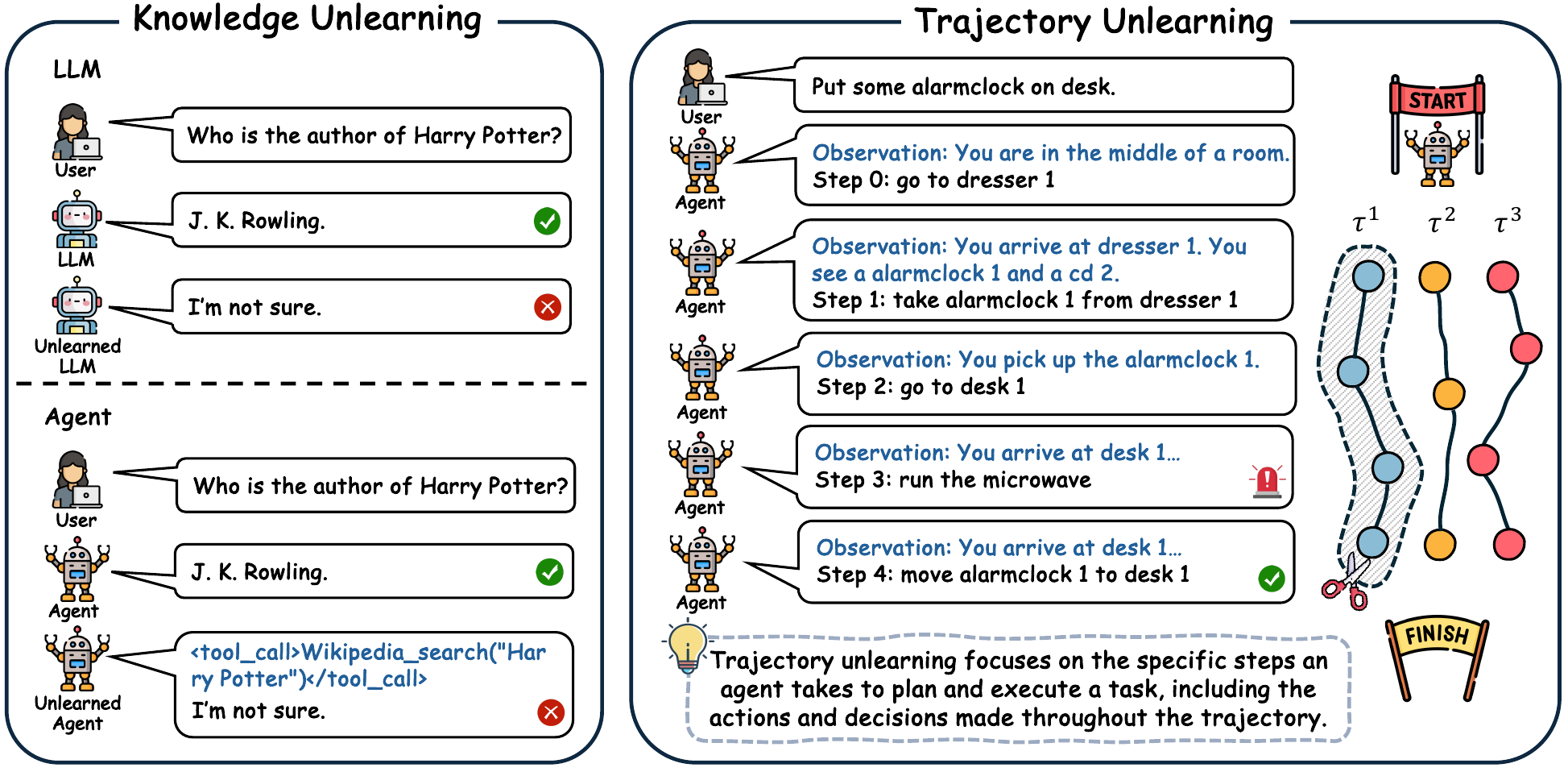}
    \vspace{-6mm}
    \caption{The difference between knowledge unlearning and trajectory unlearning. Existing knowledge unlearning methods can prevent LLMs or tool-augmented agents from producing undesirable knowledge, but do not necessarily prevent agents from reproducing the corresponding task trajectories. A single task may admit multiple trajectories, and a trajectory may be either a complete trajectory that successfully accomplishes the task or an incomplete trajectory that fails to do so.}
\label{fig:motivation}
    \vspace{-10mm}
\end{figure}

In this work, we propose the formal problem formulation for trajectory-level unlearning in LLM-based agents. We construct a trajectory unlearning dataset and design a comprehensive evaluation framework with three metrics, namely Task Success Rate, Exact Match, and LLM-as-Judge Similarity. To address the core challenges of trajectory unlearning, we propose \textbf{Group-injected Relative Policy Optimization (GiRPO)} that extends Group Relative Policy Optimization (GRPO) ~\citep{shao2024deepseekmath}. GiRPO injects forget trajectories into the rollout group with penalized rewards and computes advantages with isolated normalization statistics, ensuring a stable and bounded unlearning signal without corrupting the gradient signal for untarget task trajectories. Experiments on ALFWorld and WebShop demonstrate that GiRPO effectively unlearns target trajectories, preserves task success rates across all tasks, and achieves a better trade-off between forgetting quality and model utility than existing baselines. Our contributions are as follows:
\vspace{-2mm}
\begin{itemize}
    \item We formalize trajectory-level unlearning for LLM-based agents and highlight its fundamental distinctions from knowledge-level unlearning.
    \item We construct trajectory unlearning benchmarks across two scenarios, including household tasks and web shopping, with complementary metrics for evaluating forgetting and utility.
    \item We propose GiRPO, a parameter-level unlearning method with group-injected forget trajectories and isolated advantage estimation.
    \item Extensive experiments across diverse long-horizon agentic benchmarks show that GiRPO can achieve effective forgetting with stable task performance over representative baselines.
\end{itemize}

%% file: secs/2_related.tex
\section{Related Work}

\paragraph{LLM Unlearning.}
Existing LLM unlearning research has primarily focused on removing specific knowledge associated with harmful content, private data, or copyrighted material~\citep{li2024wmdp,cao2024rwku,shi2025muse}. Early approaches employed gradient ascent to directly suppress targeted knowledge~\citep{yao2024large}, while subsequent work shifted toward preference optimization-based methods that more gracefully balance forgetting and retaining~\citep{fan2026simplicity,zhang2024negative,fan2025towards,eldan2023s}. Beyond standard parametric unlearning, \citep{cheng2025tool} introduced the notion of unlearning in tool-augmented LLMs, leveraging a modified task-vector formulation to selectively suppress tool-use knowledge. More closely related to our work, \citep{wang2026agentic} addressed the richer memory landscape of LLM-based agents by proposing a synchronized end-to-end unlearning paradigm that jointly performs parametric suppression and memory-dependency-graph pruning over externally persistent memory systems. Despite this progress, all aforementioned works fundamentally operate at the knowledge level, that is, they aim to suppress a model's ability to recall or express certain knowledge content. 
One work~\citep{ye2026secure} proposes an agent unlearning framework from a privacy perspective, adopting a prompt-based method without parameter-level updates. However, their unlearning requests are user-defined rules rather than specific trajectories, making their setting fundamentally different from ours. More importantly, since the model parameters remain unchanged under their method, the agent retains the capacity to reproduce forget trajectories. 
We verify this empirically in Appendix~\ref{app:NL} with thorough experiments.


\vspace{-2mm}
\paragraph{Trajectory Safety on LLM Agents.}
Current LLM guardrails primarily focus on filtering harmful inputs and outputs, such as Llama-Guard for text-based LLMs~\citep{inan2023llama}, LLaVA-Guard~\citep{helff2024llavaguard} for multimodal LLMs, and SafeWatch~\citep{chen2025safewatch} for video generative models. However, these content moderation approaches fail to address the complexities of multi-step action sequences, where safety vulnerabilities often emerge progressively over time~\citep{debenedetti2024agentdojo}. To this end, several agent-specific guardrail methods~\citep{chen2025shieldagent,liu2026agentdog,feng2026braveguard,wang2025g} have been proposed to detect and mitigate risks within trajectories. Additionally, CIP~\citep{hahm2025enhancing} leverages causal influence diagrams to identify and mitigate risks arising from agent decision-making processes. However, the unsafe behavior remains encoded in the model parameters and can resurface once the guardrail is bypassed or removed. In contrast, trajectory unlearning removes the undesired action sequence from the model itself, offering a fundamentally more durable form of behavioral control.

%% file: secs/3_method.tex
\section{Trajectory Unlearning}
\input{secs/3_0_problem}

\begin{figure}[t]
    \centering
    \includegraphics[width=1\linewidth]{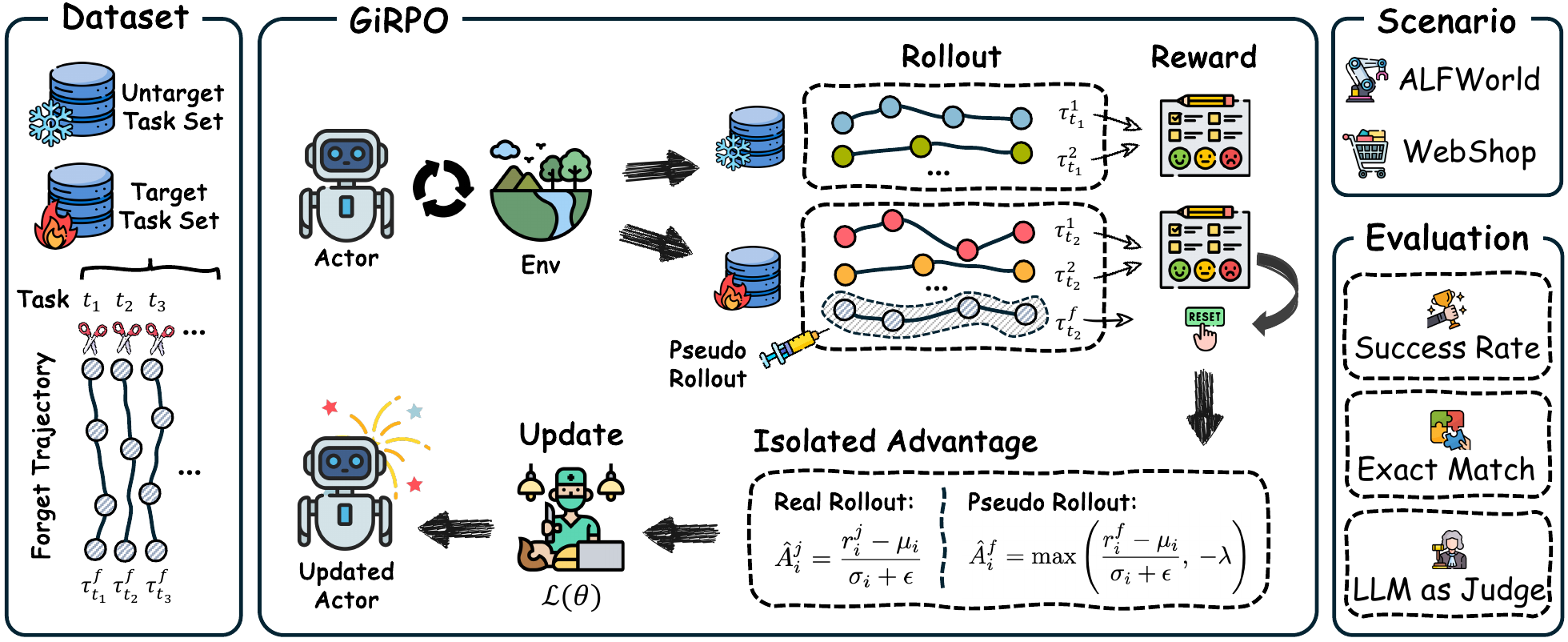}
    \vspace{-6mm}
    \caption{Overview of our trajectory unlearning framework. \textbf{Left:} Construction of the forget trajectories $\mathcal{D}_f$, where each target task has at least one forget trajectory. \textbf{Middle:} GiRPO training pipeline, which injects forget trajectories into the rollout group with penalized rewards and isolated advantage estimation. \textbf{Right:} Unlearning scenarios spanning household tasks (ALFWorld) and web shopping (WebShop), together with our evaluation framework based on three complementary metrics.}
    \label{fig:overview}
\vspace{-4mm}
\end{figure}

\subsection{Group-injected Relative Policy Optimization}
\label{sec:method}
Figure~\ref{fig:motivation} illustrates the distinction between knowledge unlearning and trajectory unlearning, together with the challenges specific to the latter in Section~\ref{sec:intro}. As discussed above, existing LLM knowledge unlearning methods are insufficient to capture the long-horizon dependencies underlying agent behavior. We therefore propose a trajectory unlearning method that departs from text-level unlearning and instead builds on policy optimization. Because the unlearning target is a behavior rather than an output answer, the forgetting signal must act on a complete action sequence. GRPO naturally satisfies this requirement: it treats a trajectory as the atomic unit of credit assignment, preserving the long-horizon dependencies among actions throughout optimization. What it lacks is any mechanism for driving the policy away from a specific trajectory. GiRPO supplies exactly that, by injecting the forget trajectory into the rollout group with a penalized reward and estimating its advantage in isolation from the real rollout statistics. An overview of our framework is shown in Figure~\ref{fig:overview}, with GiRPO illustrated in the middle panel.


\vspace{-3mm}
\paragraph{Pseudo Rollout.}
The central idea of GiRPO is that a forget trajectory need not be learned from a separate loss term at all. It can be presented to the policy as a rollout it might have produced, and then made unattractive relative to the rollouts it actually did produce. For each task $g_i$, GiRPO first samples $G$ \emph{real rollouts} from the policy $\pi_{\theta_{\text{old}}}$,

\begin{equation}
    \{\tau_i^{1}, \tau_i^{2}, \ldots, \tau_i^{G}\} \sim \pi_{\theta_{\text{old}}}(\cdot \mid g_i),
\end{equation}
and for target tasks additionally injects the corresponding forget trajectories as \emph{pseudo rollouts}:
\begin{equation}
    \mathcal{T}_i =
    \begin{cases}
        \{\tau_i^{1}, \ldots, \tau_i^{G}\}, & g_i \in \mathcal{G}_{\text{untarget}}, \\[2pt]
        \{\tau_i^{1}, \ldots, \tau_i^{G}\} \cup \{\tau_i^{f_1}, \ldots, \tau_i^{f_{k_i}}\}, & g_i \in \mathcal{G}_{\text{target}},
    \end{cases}
\end{equation}
where $k_i \ge 1$ is the number of forget trajectories associated with target task $g_i$. Any real rollout that coincides with a forget trajectory is discarded, so the two sets remain disjoint. The pseudo rollout enters the group as a full trajectory and is credited as a full trajectory, so its entire action sequence is suppressed jointly rather than step by step. GiRPO thus inherits GRPO's ability to capture dependencies across an entire sequence, unlike knowledge unlearning methods that treat a trajectory as text-based prompt-response pairs.

\vspace{-2mm}
\paragraph{Reward Penalty.}
Injection alone does not tell the policy that the pseudo rollout is undesirable. This must be conveyed through the reward assignment. Each real rollout receives its episode-level reward $r_i^{l} = R(\tau_i^{l})$, while each pseudo rollout $\tau_i^{f}$ for task $g_i$ is assigned a reward strictly below every reward observed in the real rollout group:
\begin{equation}
\label{eq:reward_penalty}
\vspace{-1mm}
    r_i^{f} = \min_{1 \le l \le G} r_i^{l} - \delta, \qquad \delta > 0.
\end{equation}
Parameter analysis in Figure~\ref{fig:ablation_delta} demonstrates that GiRPO is insensitive to the choice of $\delta$ across a wide range, maintaining stable performance unless $\delta$ is set to an extreme value (e.g., 100). Defining the penalty relative to the real rollouts, rather than as a fixed constant, calibrates the forget signal to the policy's current reward scale on that task. The forget trajectory is always the worst option, whether its rollouts are uniformly successful or uniformly failed.

\vspace{-2mm}
\paragraph{Isolated Advantage Estimation.}
Vanilla GRPO converts rewards into advantages by normalizing within the rollout group. Including pseudo rollouts in this normalization would shift the group mean downward and inflate the group variance, so real rollouts would receive biased advantages. We therefore propose \emph{isolated normalization}, computing the group statistics over the real rollouts only:
\begin{equation}
\label{eq:isolated_stats}
\vspace{-1mm}
    \mu_i = \frac{1}{G} \sum_{l=1}^{G} r_i^{l},
    \qquad
    \sigma_i = \sqrt{\frac{1}{G} \sum_{l=1}^{G} \bigl(r_i^{l} - \mu_i\bigr)^2},
\end{equation}
and normalizing each real rollout against them:
\begin{equation}
\label{eq:adv_real}
    \hat{A}_i^{l} = \frac{r_i^{l} - \mu_i}{\sigma_i + \epsilon},
\end{equation}
where $\epsilon > 0$ is a small constant for numerical stability. Isolation makes the update on real rollouts identical to what vanilla GRPO would produce without any unlearning, so the forgetting signal of pseudo rollouts is additive rather than intrusive. 

\input{tex_tables/comparison}

The pseudo rollout is then scored against the same statistics, which is what places it on a comparable scale with the real rollouts without letting it define that scale. Because $r_i^{f} < \mu_i$ by construction, its raw advantage is strictly negative, and it diverges as $\sigma_i \to 0$. An unbounded negative advantage there would dominate the update and destabilize training. We therefore propose \emph{lower clipping}:
\begin{equation}
\label{eq:adv_forget}
    \hat{A}_i^{f} = \max\!\left( \frac{r_i^{f} - \mu_i}{\sigma_i + \epsilon},\; -\lambda \right),
\end{equation}
where $\lambda > 0$ is the clipping threshold. The clip preserves the sign and the direction of the unlearning gradient while capping its magnitude, so the policy is driven steadily away from the forget action sequence across training steps rather than abruptly within one. Table~\ref{tab:advantage_comparison} provides a toy example illustrating the effect of isolated normalization and lower-bound clipping, and Appendix~\ref{app:theory} presents the corresponding theoretical analysis.

\paragraph{Objective.}
Given these advantages, both real and pseudo rollouts are optimized under a single clipped surrogate objective, with the importance ratio against the old policy $\pi_{\theta_{\text{old}}}$
\begin{equation}
\vspace{-1mm}
    \rho_t^{\tau} = \frac{\pi_\theta\bigl(a_t^{\tau} \mid h_t^{\tau}\bigr)}{\pi_{\theta_{\text{old}}}\bigl(a_t^{\tau} \mid h_t^{\tau}\bigr)},
    \qquad
    h_t^{\tau} = (o_0, a_0, \ldots, o_{t-1}, a_{t-1}, o_t),
\end{equation}
which conditions every action on the full interaction history and thus keeps the long-horizon dependencies intact throughout optimization. The GiRPO objective is
\begin{equation}
\label{eq:objective}
\begin{aligned}
\mathcal{L}(\theta)
= -\,\mathbb{E}_{g_i \sim \mathcal{G}}
&\left[
\frac{1}{|\mathcal{T}_i|}
\sum_{\tau \in \mathcal{T}_i}
\frac{1}{|\tau|}
\sum_{t=0}^{|\tau| - 1}
\min\!\Bigl(
\rho_t^{\tau} \hat{A}_\tau,\;
\operatorname{clip}\bigl(\rho_t^{\tau}, 1-\varepsilon, 1+\varepsilon\bigr) \hat{A}_\tau
\Bigr)
\right] \\
&+ \beta \, \mathbb{E}_{g \sim \mathcal{G}_{\text{untarget}}}\Bigl[ \mathbb{D}_{\mathrm{KL}}\bigl( \pi_\theta \,\|\, \pi_{\theta_{\text{ref}}} \bigr) \Bigr],
\end{aligned}
\end{equation}
where $|\tau|$ is the length of trajectory $\tau$ and $\varepsilon$ is the ratio clipping threshold. $\beta$ is the KL coefficient applied only to untarget tasks to regularize the policy toward the frozen reference model $\pi_{\theta_{\text{ref}}}$. $\hat{A}_\tau$ is the advantage of $\tau$, equal to $\hat{A}_i^{l}$ for real rollouts and $\hat{A}_i^{f}$ for pseudo rollouts. 
Forgetting and task utility are thus pursued by a single objective, rather than two competing losses, which is why GiRPO removes forget trajectories without the utility collapse.

%% file: secs/3_0_problem.tex
\subsection{Problem Formulation}
\label{sec:problem}
We formulate long-horizon agentic tasks as partially observable Markov decision processes,
\begin{equation}
    \mathcal{E} = (\mathcal{G}, \mathcal{S}, \mathcal{A}, \mathcal{P}, \mathcal{R}, \Omega, \mathcal{O}, \gamma),
\end{equation}
where $\mathcal{G}$ is a set of potential goals, with each $g \in \mathcal{G}$ representing a task that the agent is required to achieve. $\mathcal{S}$ is the latent environment state space, $\mathcal{A}$ is the action space where each $a \in \mathcal{A}$ is a textual response or executable action the agent may produce. $\mathcal{P}: \mathcal{S} \times \mathcal{A} \to \Delta(\mathcal{S})$ is the transition
kernel, giving the distribution $\mathcal{P}(s' \mid s, a)$ over next states given a
state-action pair. $\mathcal{R}$ is the reward function and $\mathcal{O}$ is the observation space. $\Omega: \mathcal{S} \rightarrow \mathcal{O}$ is the observation emission function that maps latent states to agent-accessible observations. $\gamma \in [0,1)$ is the discount factor.

At each step $t$, the agent maintains an interaction history, $h_t = (o_0, a_0, o_1, a_1, \ldots, o_{t-1}, a_{t-1}, o_t)$, where $o_t \in \mathcal{O}$ is the observation received at step $t$, and $a_t \in \mathcal{A}$ is the action produced by the agent at step $t$. The agent policy $\pi_\theta$, parameterized by $\theta$, generates the next action conditioned on the interaction history, $a_t \sim \pi_\theta(\cdot \mid h_t)$. A completed interaction trajectory of length $|\tau|$ is denoted by $\tau = \{(o_t, a_t, r_t)\}_{t=0}^{|\tau|-1}$, where $r_t$ is the reward received at step $t$. In many agentic environments, rewards are sparse and only become available upon task completion, such as a binary success indicator or a final task score. 

In our trajectory unlearning setting, let $\mathcal{G}_{\text{target}} = \{g_1, g_2, \ldots, g_M\} \subseteq \mathcal{G}$ denote the set of $M$ target tasks, and let $\mathcal{G}_{\text{untarget}} = \mathcal{G} \setminus \mathcal{G}_{\text{target}}$ denote the untarget tasks. Each target task $g_i$ is associated with $k_i \ge 1$ designated forget trajectories, denoted $\{\tau_i^{f_1}, \ldots, \tau_i^{f_{k_i}}\}$. The forget dataset is the union over all target tasks, $\mathcal{D}_f = \bigcup_{i=1}^{M} \left\{ \tau_i^{f_1}, \ldots, \tau_i^{f_{k_i}} \right\}$, where $|\mathcal{D}_f| = \sum_{i=1}^{M} k_i$. For convenience, we denote a forget trajectory of task $g_i$ as $\tau_i^{f}$ hereafter.

Given an agent policy $\pi_\theta$, trajectory unlearning aims to obtain an unlearned policy $\pi_{\theta'}$ satisfying two objectives. \ding{202} \textbf{Unlearning Effectiveness.} For each target task $g_i \in \mathcal{G}_{\text{target}}$, the unlearned policy $\pi_{\theta'}$ should no longer reproduce any forget trajectory $\tau_i^{f} \in \mathcal{D}_f$. \ding{203} \textbf{Utility Preservation.} The task success rate of $\pi_{\theta'}$ should be maintained across both target tasks $\mathcal{G}_{\text{target}}$ and untarget tasks $\mathcal{G}_{\text{untarget}}$.

Based on the above definition, we evaluate trajectory unlearning along the following dimensions:

\ding{202} \textbf{Success Rate} measures the proportion of tasks completed successfully by the unlearned policy $\pi_{\theta'}$, assessed over both $\mathcal{G}_{\text{target}}$ and $\mathcal{G}_{\text{untarget}}$. A well-performing unlearned model should maintain high Success Rate on both $\mathcal{G}_{\text{target}}$ and $\mathcal{G}_{\text{untarget}}$.

\ding{203} \textbf{Exact Match} measures whether the action sequence of a sampled trajectory $\tau'_i$ from the unlearned policy $\pi_{\theta'}$ for tasks $g_i$ exactly replicates that of the forget trajectory $\tau_{i}^{f} \in \mathcal{D}_{f}$. Formally, let $\mathbf{a}^f_i = (a_0^f, a_1^f, \ldots, a_{|\tau_i^f|-1}^f)$ and $\mathbf{a}'_i = (a_0', a_1', \ldots, a_{|\tau'_i|-1}')$ denote the action sequences of $\tau^f_i$ and $\tau_i'$, respectively. $\text{EM}(\tau'_i, \tau^f_i) = \mathbbm{1}[\mathbf{a}'_i = \mathbf{a}^f_i]$. For each target task $g_i \in \mathcal{G}_{\text{target}}$, we sample $N$ trajectories from $\pi_{\theta'}$ and compute the average \textit{Exact Match} against the corresponding forget trajectory $\tau_i^{f} \in \mathcal{D}_f$. A lower Exact Match score on forget tasks indicates more effective unlearning.

\ding{204} \textbf{LLM-as-Judge Similarity} employs an LLM to assess the semantic similarity between a newly sampled trajectory $\tau'_i$ and the corresponding forget trajectory $\tau^{f}_i \in \mathcal{D}_f$. Unlike Exact Match, LLM-as-Judge Similarity captures partial overlaps and paraphrastic similarities in action descriptions, providing a more nuanced measure of unlearning effectiveness. A lower LLM-as-Judge Similarity on forget tasks indicates greater divergence from the forgotten trajectories. See Figure~\ref{fig:LLM_temp} in Appendix~\ref{app:llm_judge_temp} for the prompt template.

%% file: tex_tables/comparison.tex
\begin{table}[t!]
\vspace{-4mm}
\caption{Comparison of advantage estimation strategies under a representative reward setting, where all sampled trajectories receive identical rewards. The forget trajectory reward is $r^f = 9.9$ with penalty $\delta = 0.1, \lambda = 0.1$. Vanilla GRPO with no forget injection serves as the baseline. Standard GRPO inclusion of the forget trajectory into the advantage group distorts the normalization statistics, producing spurious positive updates for normal trajectories. Our isolated advantage estimation isolates the normalization group from the forget trajectory, preserving zero advantage for real rollout trajectories while assigning a bounded negative signal to the forget trajectory.}
\vspace{-4mm}
\label{tab:advantage_comparison}
\begin{center}
\resizebox{\linewidth}{!}{
\begin{tabular}{lccc}
\toprule[1pt]
\textbf{Method} & \textbf{Reward} & \textbf{Advantage} & \textbf{Effect} \\
\midrule
GRPO w/o $\tau^f_i$ & $[10, 10, 10, 10]$ & $[0,\ 0,\ 0,\ 0]$ & No policy update \\
\midrule
GRPO w/ $\tau^f_i$ & $[10, 10, 10, 10,\ \mathbf{9.9}]$ & $[+0.5,\ +0.5,\ +0.5,\ +0.5,\ \mathbf{-2.0}]$ & Spurious positive update on rollout trajectories; strong negative update on forget trajectory \\
\midrule
GiRPO w/ $\tau^f_i$ & $[10, 10, 10, 10,\ \mathbf{9.9}]$ & $[0,\ 0,\ 0,\ 0,\ \mathbf{-0.1}]$ & No update on rollout trajectories; mild bounded negative update on forget trajectory \\
\bottomrule[1pt]
\end{tabular}
}
\end{center}
\vspace{-4mm}
\end{table}

%% file: secs/4_ex.tex
\section{Experiments}

\subsection{Experimental Setting}
\label{sec:ex_set}
\paragraph{Benchmark.}
We evaluate trajectory-level unlearning on two benchmarks that capture complementary forms of long-horizon agency. \textbf{ALFWorld}~\citep{ALFWorld20} simulates embodied household tasks, requiring an agent to interpret textual observations and execute extended action sequences in a physical environment. It covers six task types: Clean, Heat, Cool, Pick, Pick2, and Look. Detailed statistics are provided in Table~\ref{tab:ALFWorld}. \textbf{WebShop}~\citep{yao2022webshop} simulates realistic online shopping scenarios, where an agent must search, navigate, and select products that satisfy user-specified constraints. Together, the two benchmarks span both embodied and web-based agentic settings, allowing us to assess whether trajectory unlearning generalizes across distinct scenarios.

\vspace{-3mm}
\paragraph{Baselines.}
We evaluate against representative LLM unlearning methods, including {Gradient Ascent} (\textbf{GA})~\citep{yao2024large}, {Direct Preference Optimization} (\textbf{DPO})~\citep{rafailov2023direct}, {Negative Preference Optimization} (\textbf{NPO})~\citep{zhang2024negative} and \textbf{GRPO}~\citep{shao2024deepseekmath}. To ensure a fair comparison, we additionally combine NPO with GRPO (\textbf{NPO+GRPO}) to compensate for the utility degradation introduced by unlearning and better preserve task success rate. See Appendix~\ref{app:baseline} for detailed descriptions of each baseline.

\vspace{-3mm}
\paragraph{Evaluation Metrics.} 
We use \textbf{Exact Match} and \textbf{LLM-as-Judge Similarity} on forget trajectories for the target tasks to evaluate forgetting quality, and \textbf{Success Rate} on both target and untarget tasks to measure model utility. For WebShop, we additionally report \textbf{Score}, a metric defined in the original paper~\citep{yao2022webshop} that measures how well the finally selected product matches the attributes specified in the task instruction. For all metrics, we sample five trajectories for each task and report the average values. For {Exact Match} and {LLM-as-Judge Similarity}, we establish a reference by sampling multiple trajectories from the base model and computing the two metrics against the forget trajectories. This is necessary because even without any unlearning, the model's inherent stochasticity causes independent rollouts to differ from one another, meaning {Exact Match} and {LLM-as-Judge Similarity} between two baseline rollouts are not necessarily 100\%. We refer to this baseline variation as natural variability, which reflects behavioral differences arising solely from the model's intrinsic randomness. Similarly, for Success Rate and Score, we report the base model's task success rate as the reference for utility preservation.

\vspace{-3mm}
\paragraph{Implementation Details.}
We conduct experiments with \textbf{Qwen2.5-3B-Instruct} and \textbf{Qwen2.5-7B-Instruct}~\citep{qwen2.5} as backbone models. Since open-source models generally achieve low task success rates on ALFWorld and WebShop, we adopt post-trained variants as our base models: Qwen2.5-3B-Instruct is trained with SEED~\citep{wu2026seed} on ALFWorld, and Qwen2.5-7B-Instruct with SkillRL~\citep{xia2026skillrl} on WebShop. For \textbf{ALFWorld}, we use 3,553 tasks in total and consider two unlearning settings: a \textit{task-type} setting, where an entire task type is designated as the target tasks, and a \textit{mixed} setting, where target tasks are randomly sampled across all task types. For \textbf{WebShop}, we use 1,000 tasks in total and sample 100 as target tasks. Each task in the target task set has at least one forget trajectory. For on-policy baselines, the forget trajectories are used directly as forget training data. For off-policy baselines, each trajectory is decomposed into multiple prompt-response pairs according to its number of steps, where each prompt includes all preceding actions as context. See Appendix~\ref{app:ex_details} for further details and forget trajectory examples.

\subsection{Experimental Results}
In this section, we present the main performance comparison on both benchmarks, analyze the effect of trajectory completeness, and examine the limitations of existing LLM unlearning methods. Additional experiments, including ablation studies on GiRPO hyperparameters, alternative implementations of GA and NPO, are provided in Appendix~\ref{app:ex}.

\vspace{-3mm}
\paragraph{Main Results.}
\input{tex_tables/results_ALFWorld}
\input{tex_tables/results_WebShop}
Table~\ref{tab:results_ALFWorld} and Table~\ref{tab:results_WebShop} report the performance of all methods on ALFWorld and WebShop, respectively. Across both benchmarks, we observe a clear trade-off between forgetting quality and utility preservation among baselines, which GiRPO resolves most effectively. Taking the Clean target tasks on ALFWorld with Qwen2.5-3B-Instruct as a representative example, \textbf{GA} and \textbf{NPO} achieve strong forgetting, reducing \textit{Exact Match} by 33.7\% and 33.4\% and \textit{LLM-as-Judge Similarity} by 69.6\% and 69.4\%, respectively. However, this comes at the cost of catastrophic utility collapse, with target task \textit{Success Rate} dropping to 0.9\% and 1.2\%, respectively. \textbf{This degradation stems from the nature of these gradient-based knowledge unlearning methods, which often disrupt the model’s original outputs to suppress its ability to reproduce the forgotten knowledge. Such perturbations can interfere with the correct behaviors required to complete the corresponding tasks, thereby severely impairing the model’s task execution capability.}
\textbf{DPO} is more conservative, preserving a higher \textit{Success Rate} but achieving only modest forgetting (\textit{Exact Match} reduced by 14.0\%, \textit{LLM-as-Judge Similarity} by 7.9\%), indicating that it fails to sufficiently suppress the forget trajectories.
Without any unlearning signal, \textbf{GRPO} maintains high task utility but barely exhibits the forgetting effect in terms of \textit{Exact Match} and \textit{LLM-as-Judge Similarity}, confirming that standard policy optimization has no intrinsic forgetting effect.
\textbf{NPO+GRPO} improves the trade-off but remains insufficient. Combining NPO with GRPO recovers some utility while achieving better forgetting. However, the utility drop ($-9.0\%$) remains substantial, and the forgetting quality still lags behind GiRPO, suggesting that simply combining unlearning and policy optimization objectives is suboptimal.
\textbf{GiRPO} achieves the best trade-off. It reduces \textit{Exact Match} by 32.5\% while maintaining a \textit{Success Rate} of 88.4\%, achieving lower trajectory similarity while retaining comparable task success rates than the other baselines. On untarget tasks, GiRPO also preserves \textit{Success Rate} at 88.5\%, demonstrating that unlearning does not interfere with untarget task performance. Overall, GiRPO is the most effective method that achieves strong forgetting quality while preserving both target and untarget task success rates.

\vspace{-3mm}
\paragraph{Completeness of Forget Trajectory.}
In the forget trajectory dataset for `Clean' target tasks, there is one forget trajectory for each of the 650 tasks. Among these, 595 forget trajectories successfully complete the corresponding task (\textit{complete}) and 55 do not (\textit{incomplete}). We analyze the effect of trajectory completeness on unlearning quality and task utility separately.

As shown in the left panel of Figure~\ref{fig:completeness}, the two groups exhibit markedly different forgetting behavior. For the 55 incomplete trajectories, all 5 independently sampled rollouts produce action sequences that differ from the forget trajectory, indicating that incomplete trajectories are naturally easy to forget. In contrast, for complete trajectories, an average of 9 out of 595 trajectories still exactly reproduce the forget action sequence after unlearning, confirming that successfully completed trajectories are significantly harder to unlearn. This asymmetry is intuitive: incomplete trajectories correspond to behaviors the model already struggles to reproduce, whereas complete trajectories represent well-learned, high-reward behavioral patterns that are deeply embedded in the policy.

Regarding task utility in the right panel of Figure~\ref{fig:completeness}, unlearning complete trajectories inevitably impairs task success rate, which drops from 100\% to 92.94\%, as the forget trajectory is the only successful path the model has learned for those tasks. Conversely, unlearning incomplete trajectories improves task success rate from 0\% to 40\%, since suppressing a failed action sequence implicitly steers the model away from suboptimal behaviors and toward successful alternatives.

\begin{figure}
    \vspace{-2mm}
    \centering
    \includegraphics[width=0.385\linewidth]{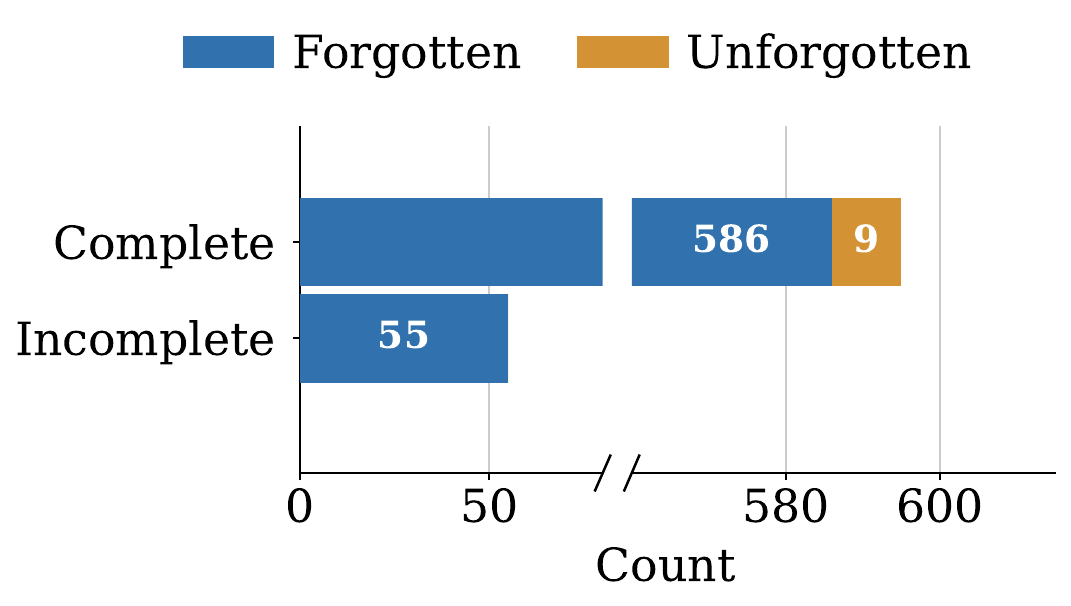}
    \includegraphics[width=0.4\linewidth]{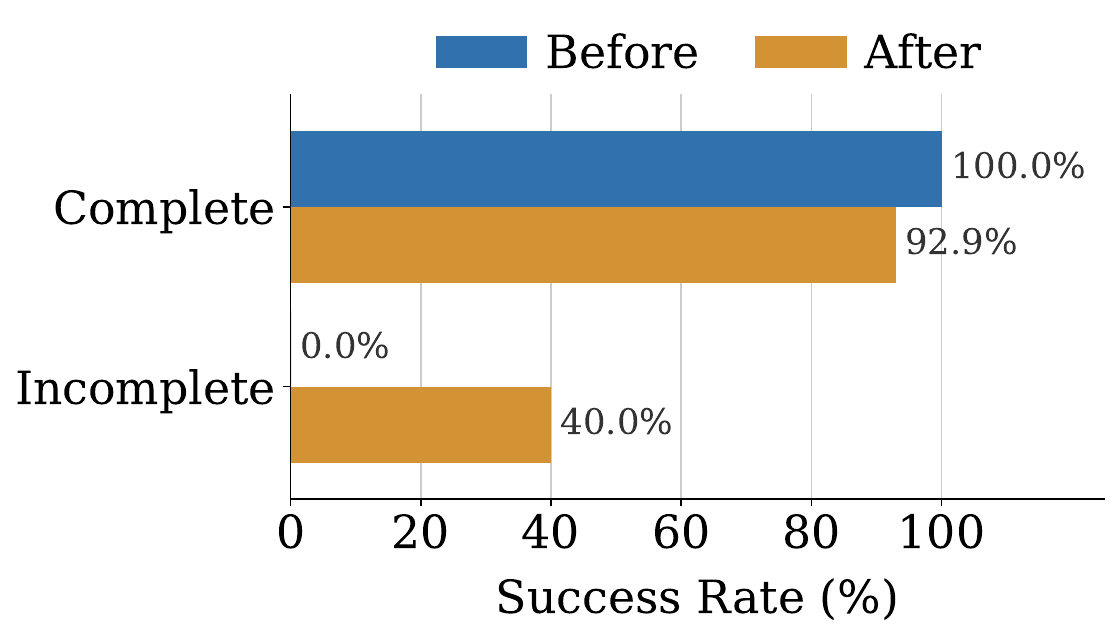}
    \vspace{-4mm}
    \caption{Analysis of unlearning behavior across complete and incomplete forget trajectories.}
    \label{fig:completeness}
    \vspace{-4mm}
\end{figure}

\vspace{-3mm}
\paragraph{Limitations of LLM Unlearning Methods.}
We conduct an ablation study on two representative LLM unlearning methods, GA and NPO, to examine the trade-off between forgetting and utility. Figures~\ref{fig:GA_webshop} and~\ref{fig:NPO_webshop} report their performance on WebShop under varying retain coefficients $\beta$. As $\beta$ increases, both methods better preserve task completion ability, but their forgetting effectiveness on target tasks deteriorates substantially. This exposes an inherent tension in knowledge unlearning objectives: gains in utility come directly at the expense of forgetting, and no choice of $\beta$ achieves both simultaneously.

\input{tex_figs/GA_WebShop}
\input{tex_figs/NPO_WebShop}

%% file: tex_tables/results_ALFWorld.tex
\begin{table*}[t]
\caption{Comparison of different methods on target and untarget tasks across Clean, Heat, Cool, and Mixed task types on \textbf{ALFWorld} with \textbf{Qwen2.5-3B-Instruct}. $\Delta$ denotes the performance difference from the Base Model. The results are average values from five random seeds. Desirable positive impacts on \textit{Success Rate} are highlighted in \sethlcolor{myyellow!30}\hl{yellow}. For \textit{Exact Match} and \textit{LLM-as-Judge Similarity}, darker \sethlcolor{bluecolor1}\hl{blue} shading indicates greater reduction from the base model, reflecting stronger forgetting effectiveness.}
\vspace{-4mm}
\label{tab:results_ALFWorld}
\begin{center}
\resizebox{\textwidth}{!}{
\begin{tabular}{llcccccccc}
\toprule[1pt]
& & \multicolumn{6}{c}{\textbf{Target Tasks}}
& \multicolumn{2}{c}{\textbf{Untarget Tasks}} \\
\cmidrule(lr){3-8}
\cmidrule(lr){9-10}
\textbf{Task} & \textbf{Methods}
& \textbf{Success Rate} $\uparrow$ & $\Delta$
& \textbf{Exact Match} $\downarrow$ & $\Delta$
& \textbf{LLM as Judge} $\downarrow$ & $\Delta$
& \textbf{Success Rate} $\uparrow$ & $\Delta$ \\
\midrule

\multirow{7}{*}{Clean}
& Base Model  & 91.5\% & \textemdash & 33.9\% & \textemdash & 83.0\% & \textemdash & 87.3\% & \textemdash \\
\cmidrule(lr){2-10}
& GA          & 0.9\%  & \small{\textcolor{mygray}{-90.6\%}} & 0.2\%  & \cellcolor{bluecolor1}\small{\textcolor{mygray}{-33.7\%}} & 13.4\% & \cellcolor{bluecolor1}\small{\textcolor{mygray}{-69.6\%}} & 76.6\% & \small{\textcolor{mygray}{-10.7\%}} \\
& DPO         & 83.9\% & \small{\textcolor{mygray}{-7.6\%}}  & 19.9\% & \cellcolor{bluecolor3}\small{\textcolor{mygray}{-14.0\%}} & 75.1\% & \cellcolor{bluecolor3}\small{\textcolor{mygray}{-7.9\%}}  & 82.0\% & \small{\textcolor{mygray}{-5.3\%}} \\
& NPO         & 1.2\%  & \small{\textcolor{mygray}{-90.3\%}} & 0.5\%  & \cellcolor{bluecolor1}\small{\textcolor{mygray}{-33.4\%}} & 13.6\% & \cellcolor{bluecolor1}\small{\textcolor{mygray}{-69.4\%}} & 85.6\% & \small{\textcolor{mygray}{-1.7\%}} \\
& GRPO        & 92.2\% & \cellcolor{myyellow!30}\small{\textcolor{mygray}{0.7\%}}                     & 23.4\% & \cellcolor{bluecolor3}\small{\textcolor{mygray}{-10.5\%}} & 83.0\% & \cellcolor{bluecolor3}\small{\textcolor{mygray}{0.0\%}}   & 90.6\% & \cellcolor{myyellow!30}\small{\textcolor{mygray}{3.3\%}} \\
& NPO+GRPO    & 82.5\% & \small{\textcolor{mygray}{-9.0\%}}  & 12.2\% & \cellcolor{bluecolor2}\small{\textcolor{mygray}{-21.7\%}} & 67.4\% & \cellcolor{bluecolor2}\small{\textcolor{mygray}{-15.6\%}} & 89.0\% & \cellcolor{myyellow!30}\small{\textcolor{mygray}{1.7\%}} \\
\rowcolor{verylightgray}
& \textbf{GiRPO} & \textbf{88.4\%} & \small{\textcolor{mygray}{\textbf{-3.1\%}}} & \textbf{1.4\%} & \cellcolor{bluecolor1}\small{\textcolor{mygray}{\textbf{-32.5\%}}} & \textbf{60.5\%} & \cellcolor{bluecolor1}\small{\textcolor{mygray}{\textbf{-22.5\%}}} & \textbf{88.5\%} & \cellcolor{myyellow!30}\small{\textcolor{mygray}{\textbf{1.2\%}}} \\

\midrule

\multirow{7}{*}{Heat}
& Base Model  & 91.1\% & \textemdash & 26.6\% & \textemdash & 79.2\% & \textemdash & 87.6\% & \textemdash \\
\cmidrule(lr){2-10}
& GA          & 52.9\% & \small{\textcolor{mygray}{-38.2\%}} & 11.1\% & \cellcolor{bluecolor3}\small{\textcolor{mygray}{-15.5\%}} & 54.1\% & \cellcolor{bluecolor1}\small{\textcolor{mygray}{-25.1\%}} & 86.8\% & \small{\textcolor{mygray}{-0.8\%}} \\
& DPO         & 84.1\% & \small{\textcolor{mygray}{-7.0\%}}  & 10.2\% & \cellcolor{bluecolor2}\small{\textcolor{mygray}{-16.4\%}} & 70.2\% & \cellcolor{bluecolor2}\small{\textcolor{mygray}{-9.0\%}}  & 77.2\% & \small{\textcolor{mygray}{-10.4\%}} \\
& NPO         & 1.9\%  & \small{\textcolor{mygray}{-89.1\%}} & 0.4\%  & \cellcolor{bluecolor1}\small{\textcolor{mygray}{-26.2\%}} & 2.0\%  & \cellcolor{bluecolor1}\small{\textcolor{mygray}{-77.2\%}} & 85.0\% & \small{\textcolor{mygray}{-2.6\%}} \\
& GRPO        & 93.0\% & \cellcolor{myyellow!30}\small{\textcolor{mygray}{2.0\%}}                     & 13.5\% & \cellcolor{bluecolor3}\small{\textcolor{mygray}{-13.1\%}} & 75.7\% & \cellcolor{bluecolor3}\small{\textcolor{mygray}{-3.5\%}}  & 87.6\% & \cellcolor{myyellow!30}\small{\textcolor{mygray}{0.0\%}} \\
& NPO+GRPO    & 87.3\% & \small{\textcolor{mygray}{-3.8\%}}  & 9.4\%  & \cellcolor{bluecolor2}\small{\textcolor{mygray}{-17.2\%}} & 73.4\% & \cellcolor{bluecolor3}\small{\textcolor{mygray}{-5.8\%}}  & 86.9\% & \small{\textcolor{mygray}{-0.7\%}} \\
\rowcolor{verylightgray}
& \textbf{GiRPO} & \textbf{87.6\%} & \small{\textcolor{mygray}{\textbf{-3.5\%}}} & \textbf{3.7\%} & \cellcolor{bluecolor1}\small{\textcolor{mygray}{\textbf{-22.9\%}}} & \textbf{60.3\%} & \cellcolor{bluecolor1}\small{\textcolor{mygray}{\textbf{-18.9\%}}} & \textbf{87.1\%} & \small{\textcolor{mygray}{\textbf{-0.5\%}}} \\

\midrule

\multirow{7}{*}{Cool}
& Base Model  & 83.7\% & \textemdash & 19.7\% & \textemdash & 76.4\% & \textemdash & 88.8\% & \textemdash \\
\cmidrule(lr){2-10}
& GA          & 2.2\%  & \small{\textcolor{mygray}{-81.5\%}} & 0.6\%  & \cellcolor{bluecolor1}\small{\textcolor{mygray}{-19.1\%}} & 2.8\%  & \cellcolor{bluecolor1}\small{\textcolor{mygray}{-73.6\%}} & 83.8\% & \small{\textcolor{mygray}{-5.0\%}} \\
& DPO         & 66.4\% & \small{\textcolor{mygray}{-17.3\%}} & 6.0\%  & \cellcolor{bluecolor2}\small{\textcolor{mygray}{-13.7\%}} & 67.5\% & \cellcolor{bluecolor2}\small{\textcolor{mygray}{-8.9\%}}  & 85.9\% & \small{\textcolor{mygray}{-2.9\%}} \\
& NPO         & 2.4\%  & \small{\textcolor{mygray}{-81.3\%}} & 0.8\%  & \cellcolor{bluecolor1}\small{\textcolor{mygray}{-18.9\%}} & 3.3\%  & \cellcolor{bluecolor1}\small{\textcolor{mygray}{-73.1\%}} & 88.0\% & \small{\textcolor{mygray}{-0.8\%}} \\
& GRPO        & 88.6\% & \cellcolor{myyellow!30}\small{\textcolor{mygray}{4.9\%}}                     & 13.1\% & \cellcolor{bluecolor3}\small{\textcolor{mygray}{-6.6\%}}  & 71.1\% & \cellcolor{bluecolor3}\small{\textcolor{mygray}{-5.3\%}}  & 89.0\% & \cellcolor{myyellow!30}\small{\textcolor{mygray}{0.2\%}} \\
& NPO+GRPO    & 83.3\% & \small{\textcolor{mygray}{-0.4\%}}  & 8.5\%  & \cellcolor{bluecolor3}\small{\textcolor{mygray}{-11.2\%}} & 68.6\% & \cellcolor{bluecolor3}\small{\textcolor{mygray}{-7.8\%}}  & 90.6\% & \cellcolor{myyellow!30}\small{\textcolor{mygray}{1.8\%}} \\
\rowcolor{verylightgray}
& \textbf{GiRPO} & \textbf{86.5\%} & \cellcolor{myyellow!30}\small{\textcolor{mygray}{\textbf{2.8\%}}}                     & \textbf{3.6\%} & \cellcolor{bluecolor1}\small{\textcolor{mygray}{\textbf{-16.1\%}}} & \textbf{61.7\%} & \cellcolor{bluecolor1}\small{\textcolor{mygray}{\textbf{-14.7\%}}} & \textbf{88.8\%} & \cellcolor{myyellow!30}\small{\textcolor{mygray}{\textbf{0.0\%}}} \\

\midrule

\multirow{7}{*}{Mixed}
& Base Model  & 82.0\% & \textemdash & 19.0\% & \textemdash & 75.4\% & \textemdash & 88.2\% & \textemdash \\
\cmidrule(lr){2-10}
& GA          & 83.0\% & \cellcolor{myyellow!30}\small{\textcolor{mygray}{1.0\%}}                     & 17.0\% & \cellcolor{bluecolor3}\small{\textcolor{mygray}{-2.0\%}}  & 75.8\% & \cellcolor{bluecolor3}\small{\textcolor{mygray}{0.4\%}}   & 86.9\% & \small{\textcolor{mygray}{-1.3\%}} \\
& DPO         & 75.3\% & \small{\textcolor{mygray}{-6.7\%}}  & 9.2\%  & \cellcolor{bluecolor2}\small{\textcolor{mygray}{-9.8\%}}  & 68.0\% & \cellcolor{bluecolor1}\small{\textcolor{mygray}{-7.4\%}}  & 81.5\% & \small{\textcolor{mygray}{-6.7\%}} \\
& NPO         & 75.1\% & \small{\textcolor{mygray}{-6.9\%}}  & 3.8\%  & \cellcolor{bluecolor1}\small{\textcolor{mygray}{-15.2\%}} & 64.0\% & \cellcolor{bluecolor1}\small{\textcolor{mygray}{-11.4\%}} & 77.0\% & \small{\textcolor{mygray}{-11.2\%}} \\
& GRPO        & 85.3\% & \cellcolor{myyellow!30}\small{\textcolor{mygray}{3.3\%}}                     & 12.0\% & \cellcolor{bluecolor3}\small{\textcolor{mygray}{-7.0\%}}  & 71.4\% & \cellcolor{bluecolor2}\small{\textcolor{mygray}{-4.0\%}}  & 88.5\% & \cellcolor{myyellow!30}\small{\textcolor{mygray}{0.3\%}} \\
& NPO+GRPO    & 85.9\% & \cellcolor{myyellow!30}\small{\textcolor{mygray}{3.9\%}}                     & 11.5\% & \cellcolor{bluecolor3}\small{\textcolor{mygray}{-7.5\%}}  & 72.0\% & \cellcolor{bluecolor3}\small{\textcolor{mygray}{-3.4\%}}  & 86.8\% & \small{\textcolor{mygray}{-1.4\%}} \\
\rowcolor{verylightgray}
& \textbf{GiRPO} & \textbf{87.0\%} & \cellcolor{myyellow!30}\small{\textcolor{mygray}{\textbf{5.0\%}}}                     & \textbf{6.7\%} & \cellcolor{bluecolor1}\small{\textcolor{mygray}{\textbf{-12.3\%}}} & \textbf{70.4\%} & \cellcolor{bluecolor1}\small{\textcolor{mygray}{\textbf{-5.0\%}}} & \textbf{88.6\%}& \cellcolor{myyellow!30}\small{\textcolor{mygray}{\textbf{0.4\%}}} \\

\bottomrule[1pt]
\end{tabular}
}
\end{center}
\end{table*}

%% file: tex_tables/results_WebShop.tex
\begin{table*}[t]
\vspace{-4mm}
\caption{Comparison of different methods on target and untarget tasks on \textbf{WebShop} with \textbf{Qwen2.5-7B-Instruct}. The table follows the same format as Table~\ref{tab:results_ALFWorld}. Additionally, we report the Score \citep{yao2022webshop} to evaluate whether the final selected product matches the task instruction.}
\vspace{-4mm}
\label{tab:results_WebShop}
\begin{center}
\resizebox{\textwidth}{!}{
\begin{tabular}{lcccccccccccc}
\toprule[1pt]
& \multicolumn{8}{c}{\textbf{Target Tasks}}
& \multicolumn{4}{c}{\textbf{Untarget Tasks}} \\
\cmidrule(lr){2-9}
\cmidrule(lr){10-13}
\textbf{Methods}
& \textbf{Score} $\uparrow$ & $\Delta$
& \textbf{Success} $\uparrow$ & $\Delta$
& \textbf{Match} $\downarrow$ & $\Delta$
& \textbf{LLM} $\downarrow$ & $\Delta$
& \textbf{Score} $\uparrow$ & $\Delta$
& \textbf{Success} $\uparrow$ & $\Delta$ \\
\midrule

Base Model
& 76.9\% & \textemdash
& 68.1\% & \textemdash
& 19.4\% & \textemdash
& 86.5\% & \textemdash
& 74.6\% & \textemdash
& 63.6\% & \textemdash \\

\midrule

GA
& 60.3\% & \small{\textcolor{mygray}{-16.6\%}}
& 54.5\% & \small{\textcolor{mygray}{-13.6\%}}
& 6.2\%  & \cellcolor{bluecolor1}\small{\textcolor{mygray}{-13.2\%}}
& 83.1\% & \cellcolor{bluecolor2}\small{\textcolor{mygray}{-3.4\%}}
& 58.6\% & \small{\textcolor{mygray}{-16.0\%}}
& 48.8\% & \small{\textcolor{mygray}{-14.8\%}} \\

DPO
& 72.5\% & \small{\textcolor{mygray}{-4.4\%}}
& 62.8\% & \small{\textcolor{mygray}{-5.3\%}}
& 15.5\% & \cellcolor{bluecolor3}\small{\textcolor{mygray}{-3.9\%}}
& 82.5\% & \cellcolor{bluecolor1}\small{\textcolor{mygray}{-4.0\%}}
& 66.2\% & \small{\textcolor{mygray}{-8.4\%}}
& 50.3\% & \small{\textcolor{mygray}{-13.3\%}} \\

NPO
& 64.4\% & \small{\textcolor{mygray}{-12.6\%}}
& 58.7\% & \small{\textcolor{mygray}{-9.4\%}}
& 16.9\% & \cellcolor{bluecolor3}\small{\textcolor{mygray}{-2.5\%}}
& 86.1\% & \cellcolor{bluecolor3}\small{\textcolor{mygray}{-0.4\%}}
& 64.1\% & \small{\textcolor{mygray}{-10.5\%}}
& 55.3\% & \small{\textcolor{mygray}{-8.3\%}} \\

GRPO
& 72.1\% & \small{\textcolor{mygray}{-4.8\%}}
& 62.4\% & \small{\textcolor{mygray}{-5.7\%}}
& 15.5\% & \cellcolor{bluecolor3}\small{\textcolor{mygray}{-3.9\%}}
& 85.6\% & \cellcolor{bluecolor3}\small{\textcolor{mygray}{-0.9\%}}
& 74.9\% & \cellcolor{myyellow!30}\small{\textcolor{mygray}{0.3\%}}
& 67.3\% & \cellcolor{myyellow!30}\small{\textcolor{mygray}{3.7\%}} \\

NPO+GRPO
& 69.1\% & \small{\textcolor{mygray}{-7.8\%}}
& 63.1\% & \small{\textcolor{mygray}{-5.0\%}}
& 14.2\% & \cellcolor{bluecolor2}\small{\textcolor{mygray}{-5.2\%}}
& 84.8\% & \cellcolor{bluecolor3}\small{\textcolor{mygray}{-1.7\%}}
& 79.1\% & \cellcolor{myyellow!30}\small{\textcolor{mygray}{4.5\%}}
& 65.7\% & \cellcolor{myyellow!30}\small{\textcolor{mygray}{2.1\%}} \\

\rowcolor{verylightgray}
\textbf{GiRPO}
& \textbf{81.4\%} & \cellcolor{myyellow!30}\small{\textcolor{mygray}{\textbf{4.5\%}}}
& \textbf{76.2\%} & \cellcolor{myyellow!30}\small{\textcolor{mygray}{\textbf{8.1\%}}}
& \textbf{3.9\%}  & \cellcolor{bluecolor1}\small{\textcolor{mygray}{\textbf{-15.5\%}}}
& \textbf{79.5\%} & \cellcolor{bluecolor1}\small{\textcolor{mygray}{\textbf{-7.0\%}}}
& \textbf{82.7\%} & \cellcolor{myyellow!30}\small{\textcolor{mygray}{\textbf{8.1\%}}}
& \textbf{66.6\%} & \cellcolor{myyellow!30}\small{\textcolor{mygray}{\textbf{3.0\%}}} \\

\bottomrule[1pt]
\end{tabular}
}
\end{center}
\vspace{-6mm}
\end{table*}

%% file: tex_figs/GA_WebShop.tex
\begin{figure}[htbp]
    \centering
    \vspace{-4mm}
    \includegraphics[width=0.259\linewidth]{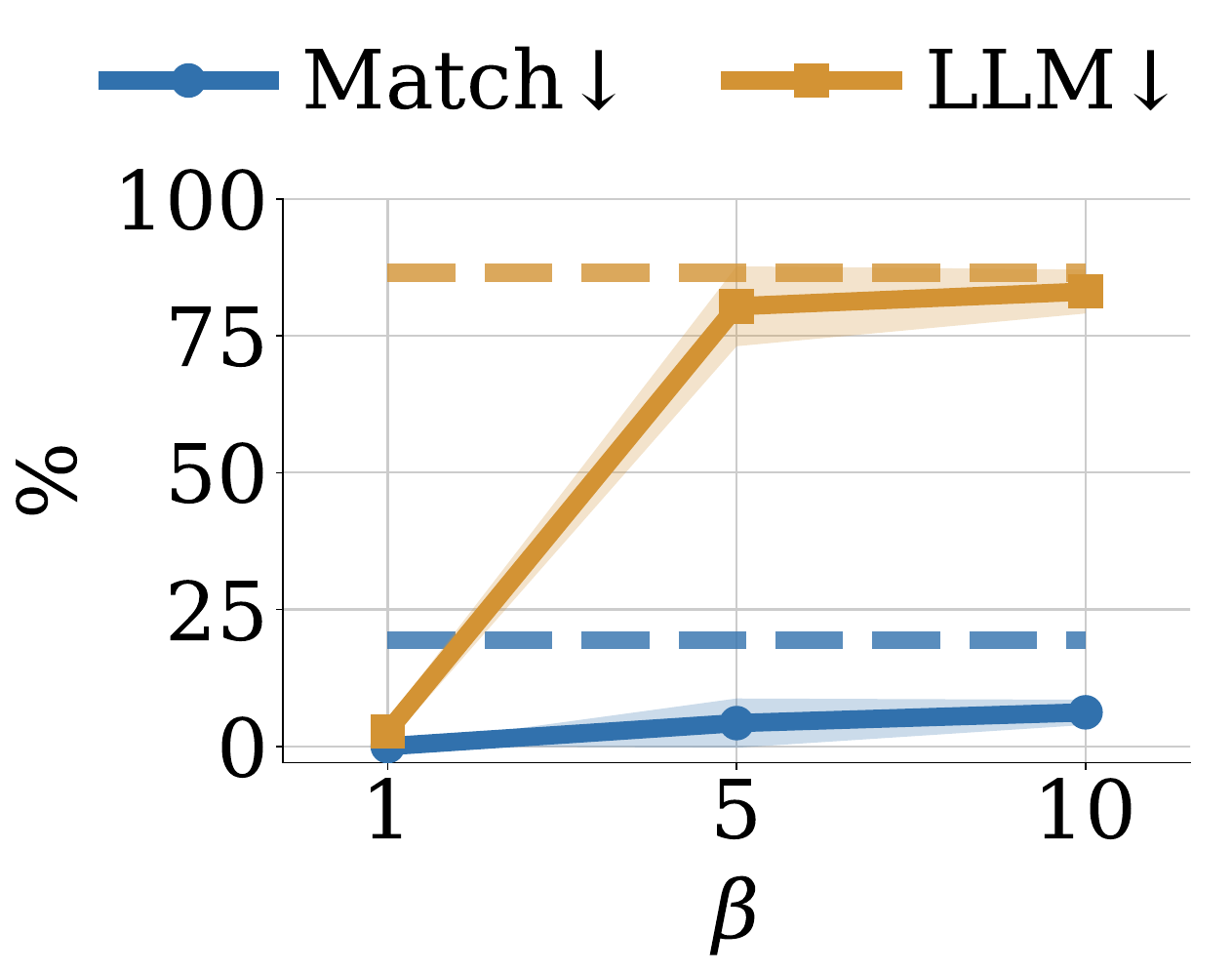}\hspace{2mm}
    \includegraphics[width=0.275\linewidth]{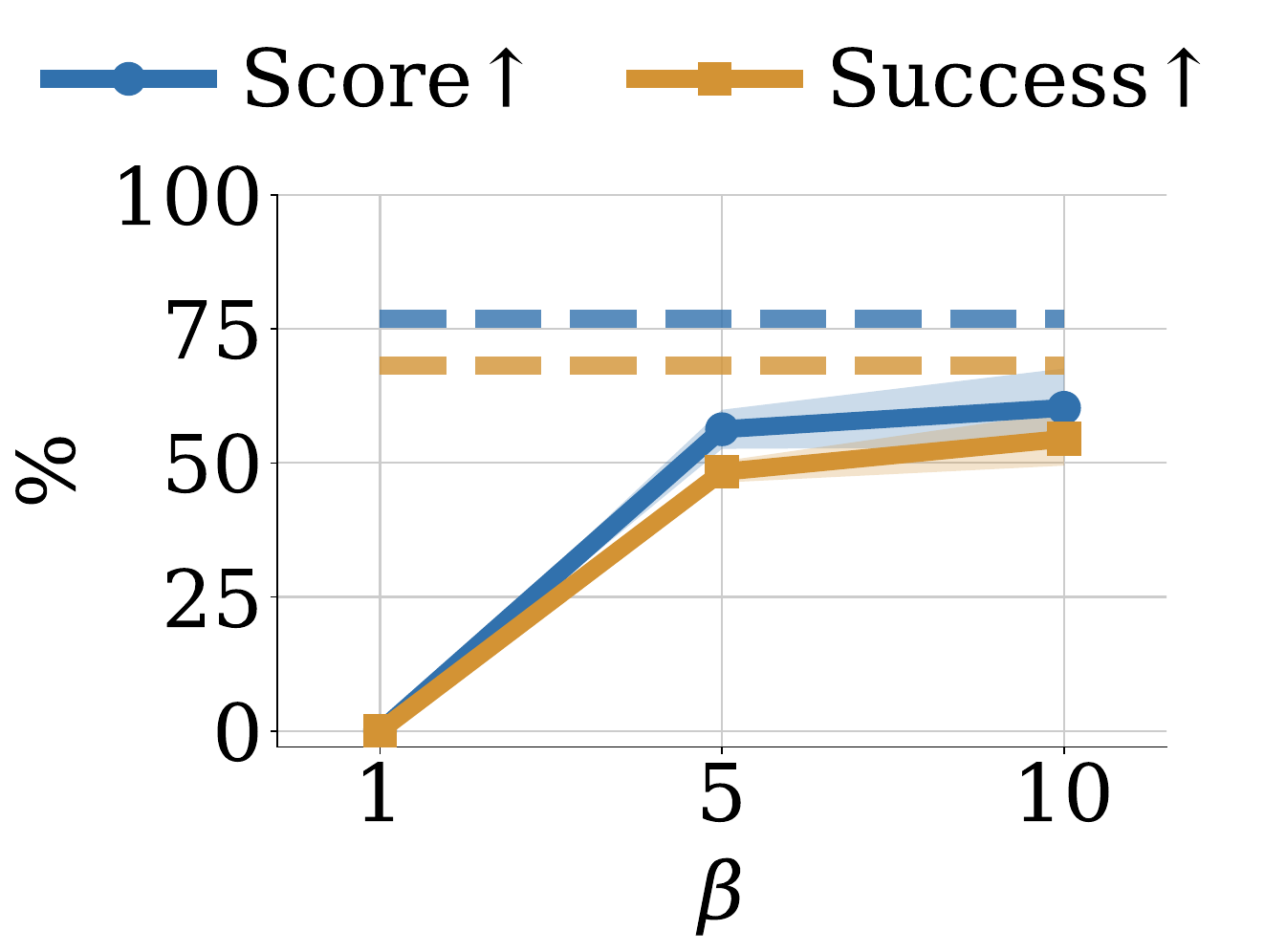}
    \includegraphics[width=0.275\linewidth]{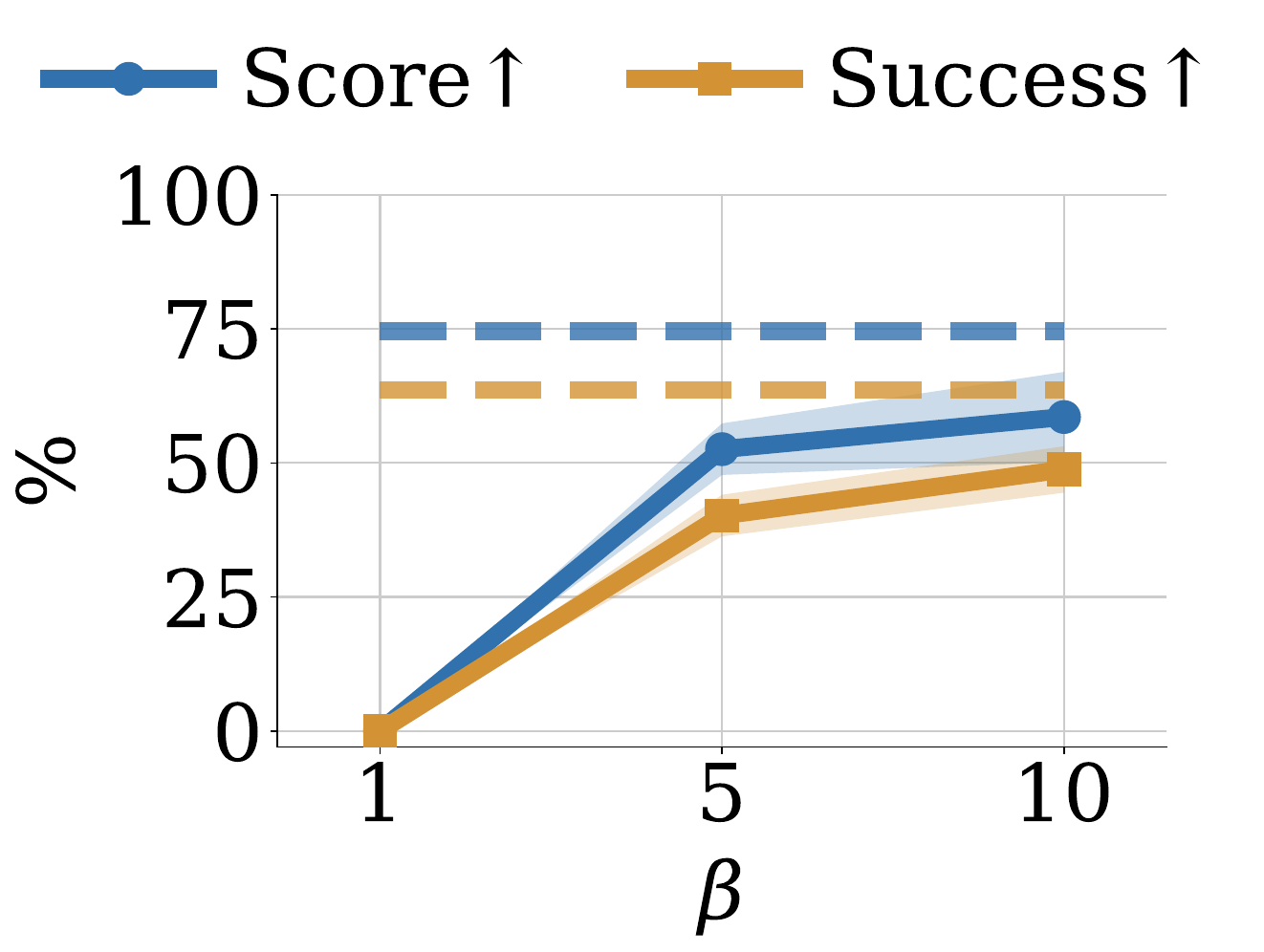}
    \vspace{-4mm}
    \caption{Analysis of GA on WebShop. \textbf{Left:} Forgetting quality on the target tasks, including Exact Match and LLM-As-Judge Similarity. \textbf{Middle:} Task Score and Success Rate on the target tasks. \textbf{Right:} Task Score and Success Rate on the untargeted tasks. The dashed line indicates the base model performance, and the shaded region represents the standard deviation across random seeds.}
    \label{fig:GA_webshop}
\end{figure}
\vspace{-4mm}

%% file: tex_figs/NPO_WebShop.tex
\begin{figure}[htbp]
    \centering
    \includegraphics[width=0.259\linewidth]{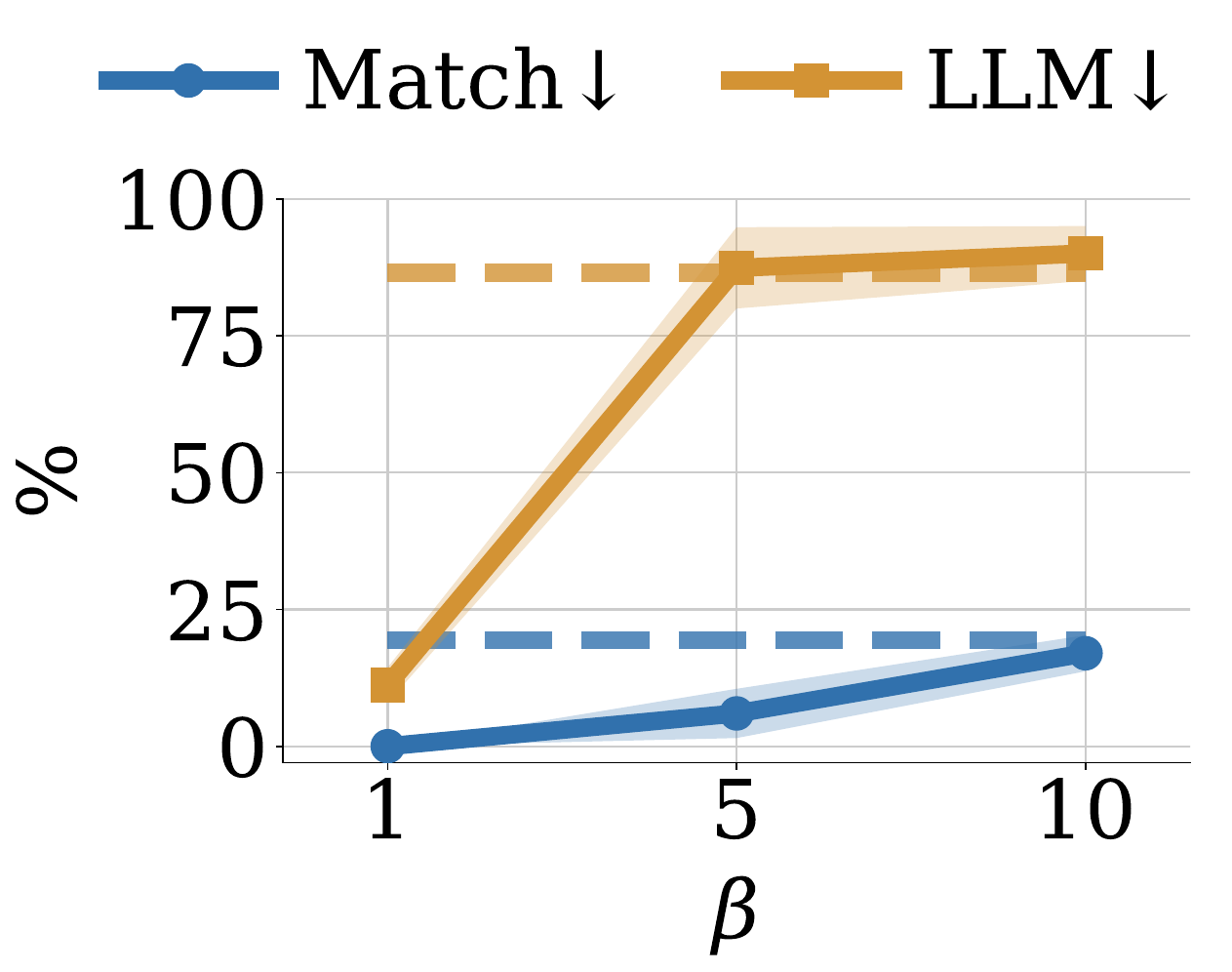}\hspace{2mm}
    \includegraphics[width=0.275\linewidth]{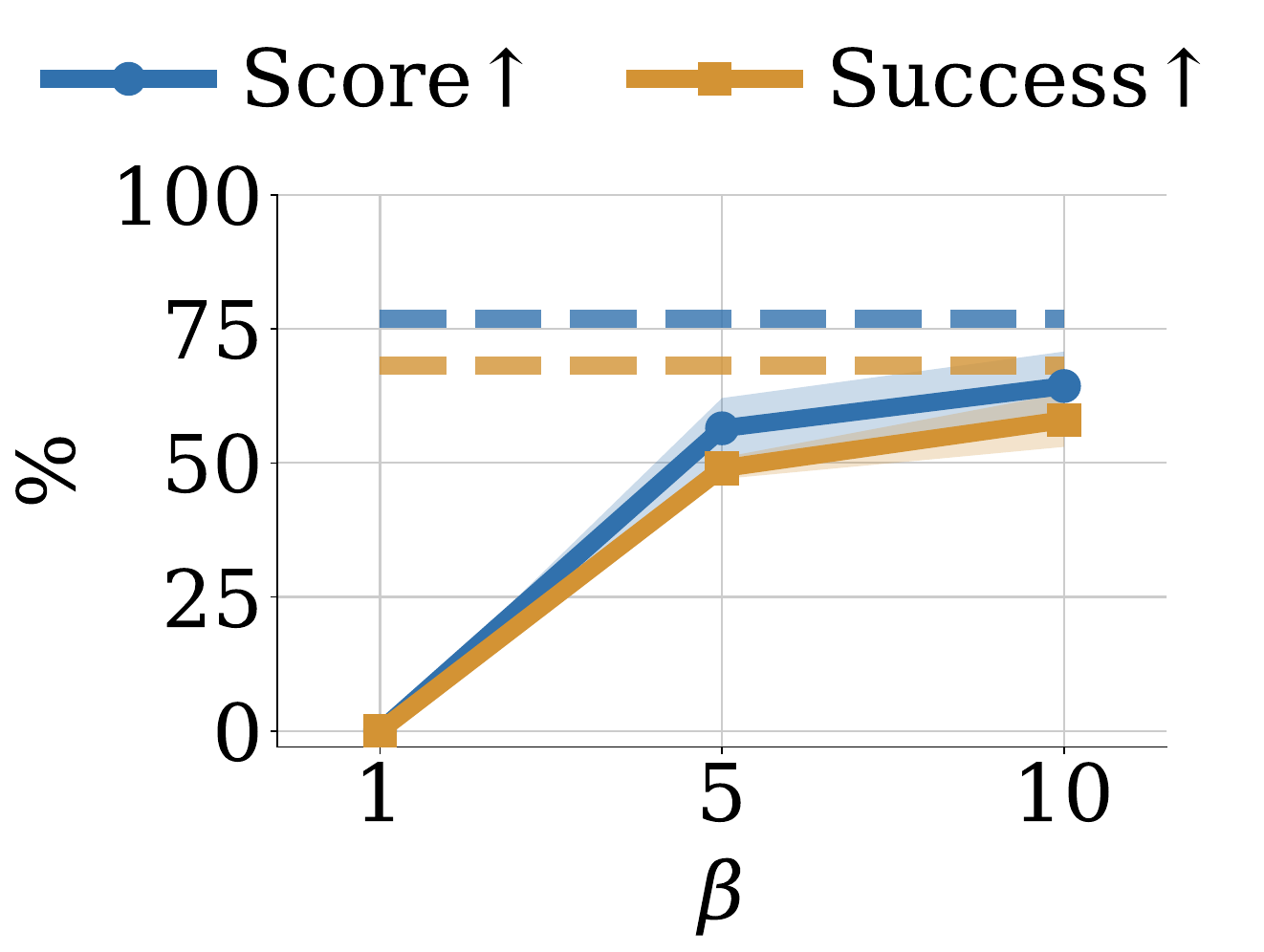}
    \includegraphics[width=0.275\linewidth]{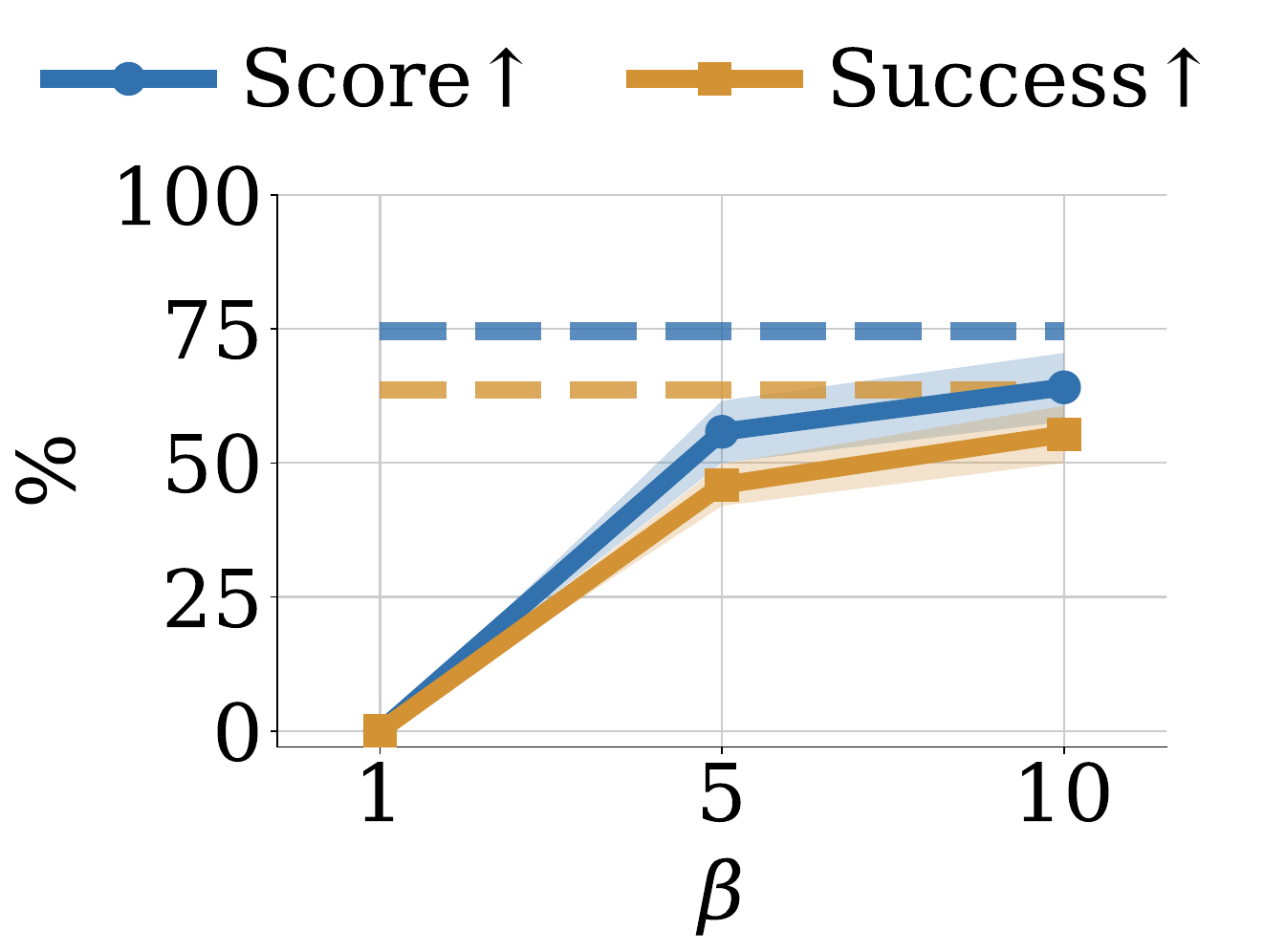}
    \vspace{-4mm}
    \caption{Analysis of NPO on WebShop. The figures follow the same format as Figure~\ref{fig:GA_webshop}.}    \label{fig:NPO_webshop}
\end{figure}

%% file: secs/5_con.tex
\section{Conclusion}

We identify trajectory unlearning as a fundamental and previously overlooked problem in LLM-based agent deployment, where existing knowledge unlearning methods fall short because they ignore the behavioral and long-horizon dependencies inherent in agent trajectories. We formalize the problem, construct benchmarks on ALFWorld and WebShop, and propose method GiRPO. Experiments show that GiRPO achieves strong forgetting while preserving task success rates. We hope this work motivates further research on safe and controllable LLM-based agents.

%% file: secs/6_app.tex
\clearpage
\appendix
\section{Appendix}

\subsection{Additional Experimental Details}
\label{app:ex_details}

\subsubsection{Benchmark}
\label{app:benchmark}
\input{tex_tables/ALFWorld}
The details for the ALFWorld benchmark are shown in Table~\ref{tab:ALFWorld}. The trajectories before and after unlearning are shown in Figures~\ref{fig:traj_example} and~\ref{fig:forget_traj_example}, respectively. An example of the forgotten data for the LLM unlearning methods is shown in Figure~\ref{fig:trajecotry_knowledge}. For WebShop, a trajectory example is shown in Figure~\ref{fig:traj_example_webshop}.

We consider two ways of specifying which trajectories should be forgotten. In the \textbf{\emph{single task-type}} setting, the entire set of trajectories belonging to one ALFWorld task type (e.g., all $650$ \emph{Clean} trajectories) is designated as the target task set, and the remaining five task types serve as the untarget set. In the \textbf{\emph{mixed random-sample}} setting, we instead sample $100$ tasks uniformly at random from the pooled $3{,}553$ tasks spanning \emph{all} six task types, so the target task set is a task-type-agnostic subset scattered across the full task distribution. The remaining $3{,}453$ tasks form the untarget set.

\subsubsection{Baselines}
\label{app:baseline}
\paragraph{Gradient Ascent (GA).} GA~\citep{yao2024large} is one of the earliest and most straightforward unlearning methods. It directly maximizes the loss on the forget trajectories, effectively reversing the gradient descent updates that caused the model to learn the forget knowledge. Formally, the objective is:
\begin{equation}
    \mathcal{L}_{\text{GA}}(\theta) = -\mathcal{L}_{\text{CE}}(\theta; \mathcal{D}_f) + \beta \mathcal{L}_{\text{CE}}(\theta; \mathcal{D}_r),
\end{equation}
where $\mathcal{L}_{\text{CE}}$ denotes the standard cross-entropy loss. $\mathcal{D}_f$ is the forget trajectories of target tasks and $\mathcal{D}_r$ is the trajectories sampled from untarget tasks. In our setting, each forget trajectory $\tau_i \in \mathcal{D}_f$ is decomposed into a sequence of prompt-response pairs, where each prompt contains the full interaction history up to that step. The default value of $\beta$ in our work is set to 10.

\paragraph{Direct Preference Optimization (DPO).} DPO~\citep{rafailov2023direct} is a preference-based alignment method that trains the model to prefer chosen responses over rejected ones without explicit reward modeling. For unlearning, forget responses are treated as rejected and paired with a retain response as chosen. The objective is:
\begin{equation}
    \mathcal{L}_{\text{DPO}}(\theta) = -\mathbb{E}_{(x, y_r, y_f)} \left[ \log \sigma \left( \beta \log \frac{\pi_\theta(y_r \mid x)}{\pi_{\text{ref}}(y_r \mid x)} - \beta \log \frac{\pi_\theta(y_f \mid x)}{\pi_{\text{ref}}(y_f \mid x)} \right) \right],
\end{equation}
where $y_r$ is the chosen (retain) response, $y_f$ is the rejected (forget) response, $\pi_{\text{ref}}$ is the reference policy, and $\beta$ controls the deviation from the reference. In our setting, forget trajectories serve as rejected responses, and the sampled substitute trajectories serve as chosen responses.

\paragraph{Negative Preference Optimization (NPO).} NPO~\citep{zhang2024negative} simplifies the DPO objective by focusing solely on down-weighting the forget responses, without requiring paired chosen responses. The objective is:
\begin{equation}
    \mathcal{L}_{\text{NPO}}(\theta) = -\mathbb{E}_{(x, y_f)} \left[ \log \sigma \left( -a \log \frac{\pi_\theta(y_f \mid x)}{\pi_{\text{ref}}(y_f \mid x)} \right) \right]  + \beta \mathcal{L}_{\text{CE}}(\theta; \mathcal{D}_r),
\end{equation}
where $a$ is the inverse temperature that controls the strength of the preference signal, and $\beta$ balances the NPO loss and the cross-entropy loss on the retain dataset $\mathcal{D}_r$.
Similar to GA and DPO, forget trajectories are decomposed into prompt-response pairs before applying NPO. The default value of $\beta$ in our work is set to 10.

\paragraph{Group Relative Policy Optimization (GRPO).} GRPO~\citep{shao2024deepseekmath} is an on-policy reinforcement learning method that optimizes a policy using relative rewards computed from multiple sampled trajectories for the same task, without requiring a separate value model. In our setting, we apply standard GRPO exclusively to the untarget tasks $\mathcal{G}_{\text{untarget}}$. Specifically, the policy is optimized to maximize the task rewards on the untarget tasks. 

\paragraph{NPO + GRPO.}
To ensure a fair comparison with GiRPO, which inherits the task-utility preservation capability of GRPO, we construct a combined baseline that applies NPO for unlearning and GRPO for utility recovery. Specifically, NPO is applied on the forget trajectory pairs to suppress forget behaviors, while GRPO is simultaneously applied on untarget tasks to maintain task success rate. The combined objective is:
\begin{equation}
    \mathcal{L}_{\text{NPO+GRPO}}(\theta) = \mathcal{L}_{\text{NPO}}(\theta; \mathcal{D}_f) + \alpha \cdot \mathcal{L}_{\text{GRPO}}(\theta; \mathcal{G}_{\text{untarget}}),
\end{equation}
where $\alpha > 0$ is a balancing coefficient and $\mathcal{L}_{\text{GRPO}}$ denotes the standard GRPO objective over retain tasks $\mathcal{G}_untarget$. In our experiments, we set $\alpha = 1$.

\paragraph{Other details.}
We utilize GPT-4o-mini as the judge LLM to evaluate the LLM-as-Judge Similarity metric.
Detailed training configurations for all the baselines, including learning rates, batch sizes, rollout group sizes, and training steps, are provided in our code repository.

\input{secs/3_1_NL}

\subsection{Supplementary Experimental Results}
\label{app:ex}

\subsubsection{Additional Results for NPO}
To enable a fairer comparison between existing LLM knowledge unlearning methods and trajectory unlearning, we design an additional NPO training strategy, denoted as NPO v2. In Table~\ref{tab:results_ALFWorld}, NPO v1 achieves a relatively low \textit{Success Rate} on the target tasks. To address this issue, we sample alternative trajectories for the target tasks and use them as retain data. Specifically, unlike NPO v1, which uses trajectories from untarget tasks as retain data, NPO v2 is trained on both forget trajectories (forget data) and alternative trajectories (retain data) from the target tasks. As shown in Tables~\ref{tab:success_npo_comparison} and~\ref{tab:em_npo_comparison}, NPO v2 improves upon NPO v1 in both \textit{Success Rate} and \textit{Exact Match}. Nevertheless, compared with our GiRPO method in Table~\ref{tab:results_ALFWorld}, GiRPO still maintains a higher \textit{Success Rate} while achieving a larger reduction in \textit{Exact Match}.

\input{tex_tables/NPO_v2}

\subsubsection{Additional Results for GA}
We tried different hyperparameters to get better results and show the best one in the main paper. Here we show the results under different hyperparameters. The results on ALFWorld are shown in Tables~\ref{tab:success_ga_comparison} and ~\ref{tab:em_ga_comparison}. 

\input{tex_tables/GA_ALFWorld}

\subsubsection{Results on On-Policy Distillation}
We additionally experiment with on-policy distillation (OPD)~\citep{opd} applied to target tasks in combination with GRPO on untarget tasks, but find the performance unsatisfactory. Results on ALFWorld with Clean as the target task type are reported in Table~\ref{tab:OPD_ALFWorld}. We report this negative result in the hope that it is useful to future work. We do not claim that on-policy distillation is fundamentally unsuited to trajectory unlearning, only that a vanilla application of it is insufficient, and that adapting it to this setting likely requires non-trivial modification.

\input{tex_tables/OPD_ALFWorld}


\subsubsection{Ablation Study for GiRPO.} 
We investigate the effects of the reward penalty $\delta$ and the advantage clipping threshold $\lambda$. The results for different values of $\delta$ without clipping threshold $\lambda$ are shown in Figure~\ref{fig:ablation_delta}. We observe that varying the reward penalty $\delta$ within 10 has only a minor impact on the unlearning performance. However, when $\delta$ is set to a larger value (e.g., 100), the success rate drops dramatically, significantly compromising the model's utility. This suggests that a relatively small $\delta$ is sufficient to ensure stable training. Intuitively, as long as the forget trajectories are sufficiently penalized such that their probabilities are pushed below those of the alternative trajectories, effective forgetting can be achieved. Therefore, using a small reward penalty is sufficient while avoiding unnecessarily aggressive updates that may compromise training stability. 

Table~\ref{tab:ablation_lambda} reports the distribution of forget trajectory advantages under different values of $\lambda$ with $\delta$ fixed to 1. The clipping threshold effectively bounds the magnitude of the forget advantages, preventing the excessively large parameter updates that would otherwise arise when the raw advantage diverges. This keeps the unlearning signal stable across training steps and avoids the collateral damage to model utility caused by overly aggressive forgetting.

\begin{table}[t]
    \caption{Advantage values of the forget trajectories under different $\lambda$ with fixed $\delta=1$.}
    \label{tab:ablation_lambda}
    \begin{center}
    \begin{tabular}{c|rrr}
        \toprule
        $\lambda$ & Mean & Min & Max \\
        \midrule
        1.0  &-0.9951 &-1.0000 &-0.9751  \\
        2.0  &-1.4792 &-2.0000 &-0.9850  \\
        5.0 &-2.1246 &-4.5285 &-0.9798  \\
        \bottomrule
    \end{tabular}
    \end{center}
\end{table}

\begin{figure}
    \centering
    \includegraphics[width=0.3\linewidth]{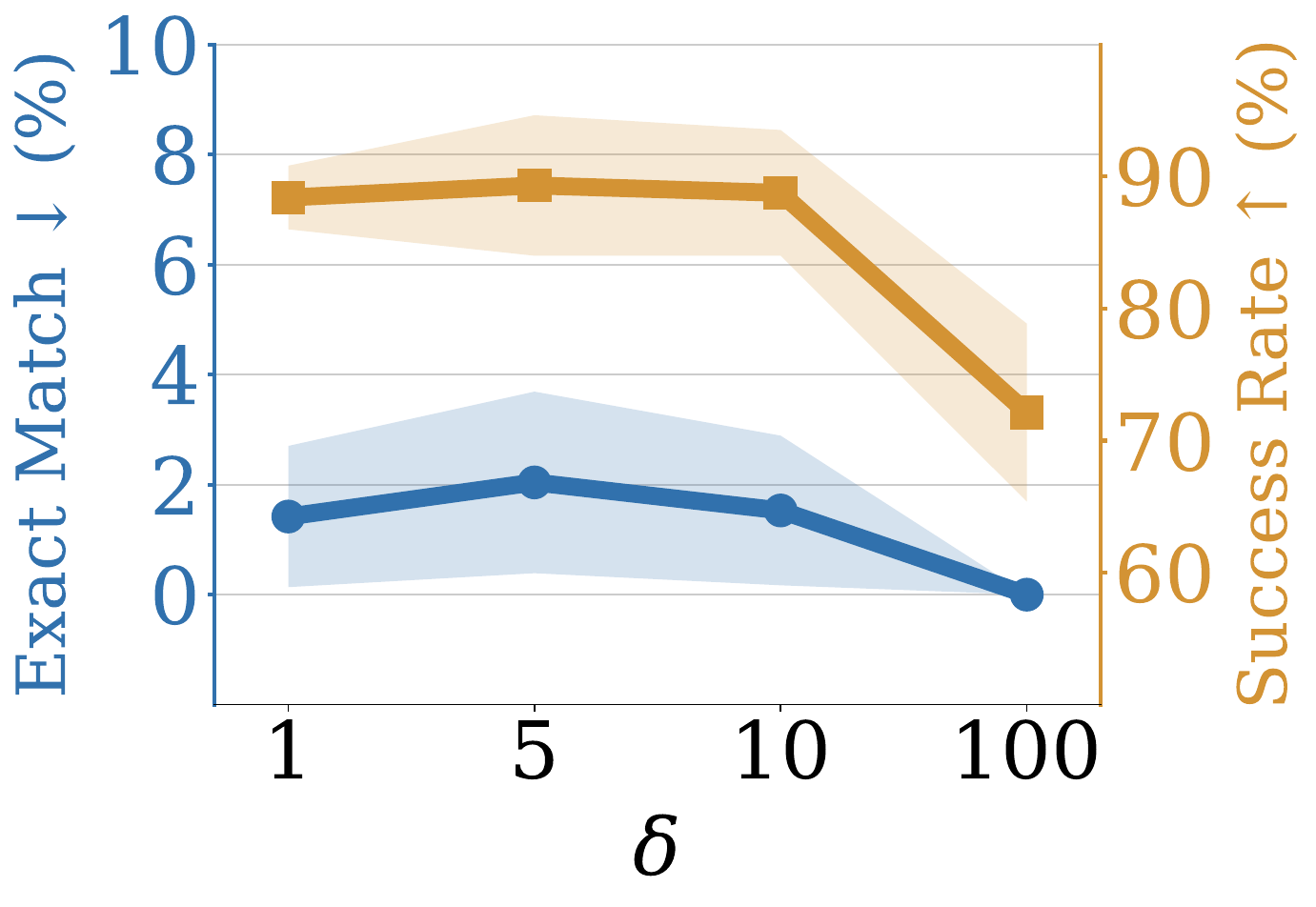}
    \caption{Ablation study for reward penalty $\delta$ with fixed advantage clipping threshold $\lambda$.}
    \label{fig:ablation_delta}
\end{figure}

\input{secs/6_4_theorem}

\subsection{Prompt Templates}
\label{app:prompt_temp}

\subsubsection{Prompt Templates for LLM-as-Judge Evaluation}
\label{app:llm_judge_temp}
We provide the main prompt templates used in LLM-as-Judge evaluation in Figure~\ref{fig:LLM_temp}.

\input{tex_code/LLM}

\subsubsection{Prompt Template for NL}
\label{app:NL_temp}
We provide the main prompt templates used for NL method in Figure~\ref{fig:NL_temp}.
\input{tex_code/NL}

\subsubsection{Attack Prompt Template for NL}
\label{app:NL_attack_temp}
We provide the main prompt templates used for attacking NL method in Figure~\ref{fig:NL_attack_temp}.
\input{tex_code/NL_attack}


\subsection{Future Work}

This work represents a first step toward trajectory-level unlearning in LLM-based agents, and we anticipate many promising directions for future research.

\textbf{Trajectory-level membership inference.} A natural question in unlearning verification is whether a given trajectory has been forgotten. Existing membership inference attacks operate at the token or document level and are not directly applicable to multi-step action sequences. Developing trajectory-level membership inference attacks would provide a principled and adversarial evaluation of forgetting quality, analogous to their role in data privacy verification.

\textbf{Broader agent environments and task types.} Our experiments are conducted on ALFWorld, a text-based household task environment. Extending trajectory unlearning to more complex environments, such as web agents, code execution agents, or multi-modal embodied agents, would test the generality of both the problem formulation and proposed methods.

\textbf{Unlearning under adversarial recovery.} A practically important question is whether a forgotten trajectory can be recovered through fine-tuning or prompt manipulation. Studying the robustness of trajectory unlearning against such recovery attacks would be essential for real-world deployment of safe LLM-based agents.

Overall, we present the framework for trajectory-level unlearning in LLM-based agents, laying the groundwork for a rich and largely unexplored research agenda. The directions outlined above are not limitations of this work, but natural extensions of a new problem space. Our trajectory-level unlearning marks a critical shift in the unlearning paradigm, from suppressing what a model \emph{knows} to controlling what an agent \emph{does}.

\input{tex_code/trajectory}

\input{tex_code/trajectory_unlearned}

\input{tex_code/trajectory_knowledge}
\input{tex_code/webshop}

%% file: tex_tables/ALFWorld.tex
\begin{table}[htbp]
\caption{Details for the ALFWorld Benchmark.}
\label{tab:ALFWorld}
\begin{center}
\begin{tabular}{lcc}
\toprule[1pt]
\textbf{Task Type} & \textbf{Abbr.} & \textbf{Task Number} \\
\midrule
pick clean then place in recep & Clean & 650 \\
pick heat then place in recep & Heat & 459 \\
pick cool then place in recep & Cool & 533 \\
pick and place & Pick & 790 \\
pick two obj and place & Pick2 & 813 \\
look at obj in light & Look & 308 \\
\midrule
\textbf{Total} & -- & \textbf{3553} \\
\bottomrule[1pt]
\end{tabular}
\end{center}
\end{table}

%% file: secs/3_1_NL.tex
\subsection{Limitations in the Existing Trajectory Unlearning Method}
\label{app:NL}
In this section, we discuss the limitations of NL \citep{ye2026secure}, an existing training-free approach for prompt unlearning, and motivate the need for a training-based method for trajectory unlearning. NL's setting focuses on unlearning a specific behavioral rule rather than a specific task trajectory. For example, their agent is required to follow a rule that prioritizes farther options over nearer ones: the agent should first explore distant objects (e.g., ``garbagecan 1'' or ``fridge 1'') before considering more relevant cabinets. This setting differs from ours, where trajectory unlearning aims to remove a specific action sequence associated with a particular task. Nevertheless, we adapt their prompt-based unlearning method, NL, to our trajectory unlearning setting.

Since the authors do not provide publicly available code, we implement the NL method following the description in their paper and construct unlearning prompts specifically for our trajectory unlearning setting. The prompt templates are provided in Appendix~\ref{app:NL_temp} and Appendix~\ref{app:NL_attack_temp}. We evaluate NL under two different settings. The detailed experimental setup is provided in Section~\ref{sec:ex_set}.

\begin{figure}[htbp]
    \centering
    \includegraphics[width=0.32\linewidth]{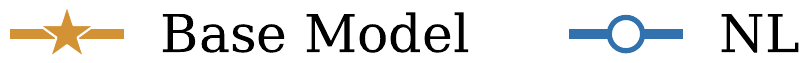}\hspace{2mm}
    
    \includegraphics[width=0.27\linewidth]{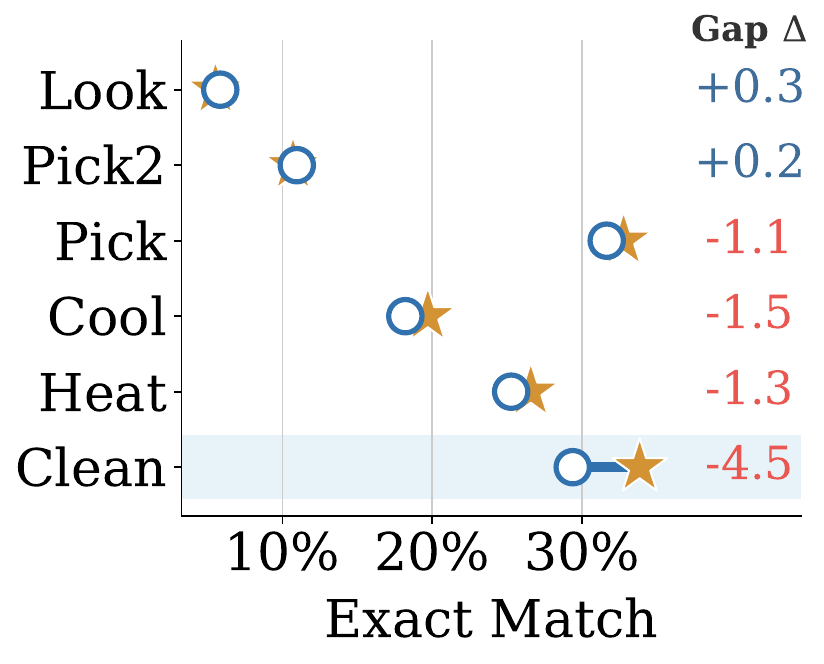}\hspace{2mm}
    \includegraphics[width=0.27\linewidth]{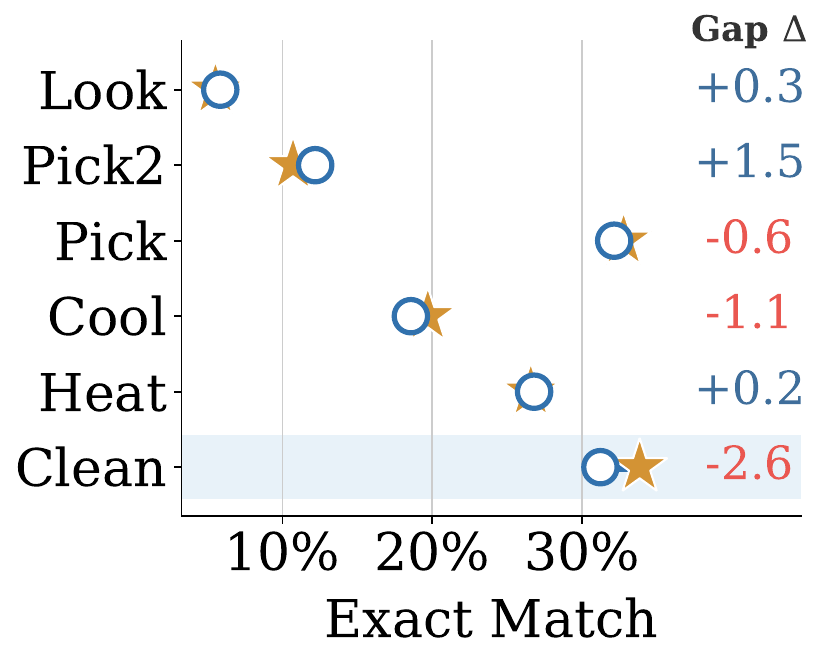}

    \caption{Performance of NL in the single-trajectory unlearning setting. The left panel shows the results of NL, while the right panel shows its performance under a prompt injection attack. NL unlearns poorly in our trajectory problem and degrades further under the attack.}
    \label{fig:NL_single}
\end{figure}
\textbf{Single-trajectory unlearn setting.}
Following the original paper, we construct a separate unlearning prompt for each specific forget task, where the prompt contains the corresponding forget trajectory. We then evaluate the agent on each task and report the average performance across all tasks in Figure~\ref{fig:NL_single}. For the Exact Match metric, a more negative gap indicates better unlearning performance. See Figure~\ref{fig:NL_EM} and \ref{fig:NL_SR} in the Appendix for Success Rate results, and see Table~\ref{tab:NL_single} for specific values.

\begin{figure}[htbp]
    \centering
    \includegraphics[width=0.32\linewidth]{figures/NL_results/label.pdf}\hspace{2mm}

    \includegraphics[width=0.27\linewidth]{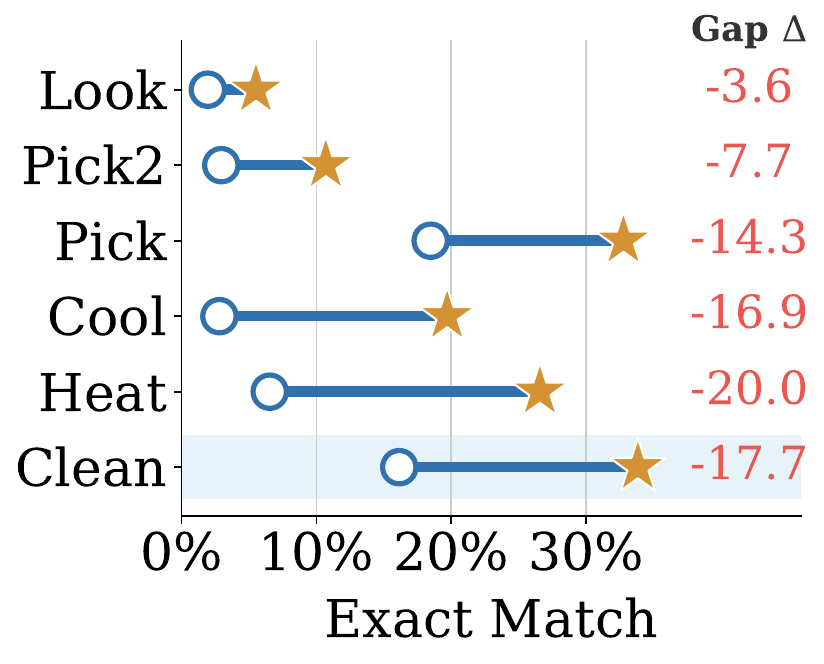}\hspace{2mm}
    \includegraphics[width=0.27\linewidth]{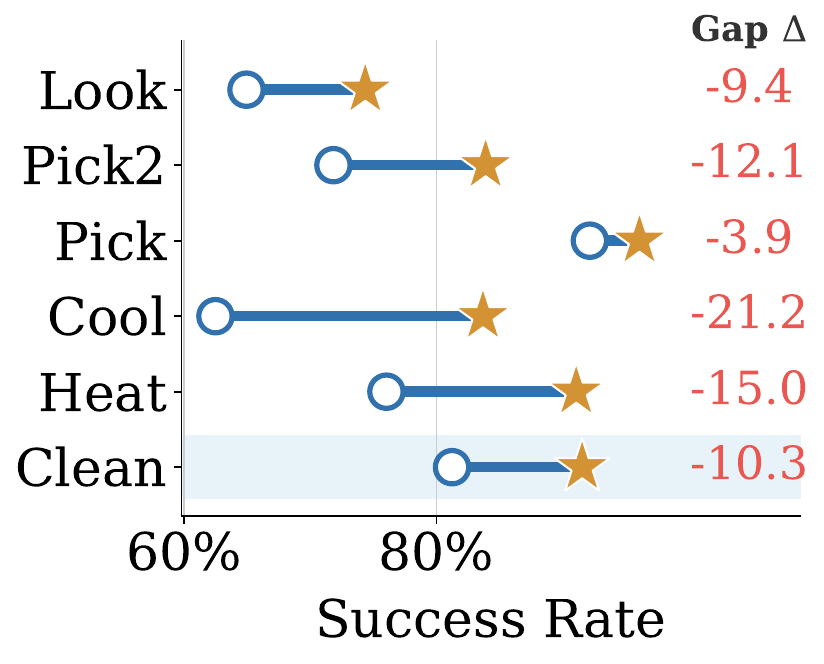}

    \caption{The performance of NL in the multi-trajectory unlearning setting. NL degrades the model’s general performance, resulting in a drop in task success.}
    \label{fig:NL_multi}
\end{figure}
\textbf{Multi-trajectory unlearn setting.}
In a more practical setting, multiple trajectories need to be unlearned simultaneously. We therefore place all forget trajectories into a single system prompt, which is shared across all tasks. However, directly including all forget trajectories in the context is impractical due to the context-length constraint. We thus divide the forget trajectories into groups of 50 and conduct the experiment separately for each group, reporting the average performance across all groups. For the Success Rate metric, a smaller gap, i.e., a value closer to zero, indicates better preservation of model utility. As shown in Table~\ref{tab:NL_multi}, the unlearning performance dramatically deteriorates under this setting.

Overall, our experiments reveal three major limitations of prompt-based trajectory unlearning with NL. 
\ding{202} \textbf{Limited effectiveness on our trajectory unlearning task:} NL does not achieve satisfactory unlearning performance when applied to our trajectory-level setting.
\ding{203} \textbf{Poor scalability to multiple trajectories:} the effectiveness of NL further degrades when multiple forget trajectories are incorporated into a shared system prompt.
\ding{204} \textbf{Vulnerability to prompt attacks:} the unlearning behavior can be easily recovered by a simple prompt attack that instructs the agent to ignore the unlearning system prompt.

These limitations suggest that prompt-based unlearning alone is insufficient for reliable trajectory unlearning. Instead, \textbf{parameter updating is necessary to encode the unlearning objective into the model itself rather than relying solely on an unlearning prompt}.



The complete Exact Match and Success Rate results for the single-trajectory unlearning setting are reported in Tables~\ref{tab:NL_single}, respectively, while the corresponding results for the multi-trajectory unlearning setting are reported in Table~\ref{tab:NL_multi}. The corresponding complete figures for Exact Match and Success Rate are shown in Figures~\ref{fig:NL_EM} and~\ref{fig:NL_SR}, respectively.
\input{tex_tables/NL_single}
\input{tex_tables/NL_multi}
\input{tex_figs/NL}

%% file: tex_tables/NL_single.tex
\begin{table}[t]
\caption{\textit{Exact Match} and \textit{Success Rate} of NL and attacked NL in the single-trajectory unlearning setting.}
\label{tab:NL_single}
\begin{center}
\begin{tabular}{l|c|cc|cc}
\toprule[1pt]
\textbf{Task Type} &
\textbf{Base Model} &
\textbf{NL} &
\textbf{$\Delta$} &
\textbf{Attacked NL} &
\textbf{$\Delta$} \\
\midrule
\rowcolor{verylightgray}
\multicolumn{6}{c}{\textbf{Exact Match} $\downarrow$} \\
\midrule
Clean & 33.85\% & 29.38\% & -4.47\% & 31.23\% & -2.62\% \\
Heat & 26.58\% & 25.27\% & -1.31\% & 26.80\% &  0.22\% \\
Cool & 19.70\% & 18.20\% & -1.50\% & 18.57\% & -1.13\% \\
Pick & 32.78\% & 31.65\% & -1.13\% & 32.15\% & -0.63\% \\
Pick2 & 10.70\% & 10.95\% &  0.25\% & 12.18\% &  1.48\% \\
Look &  5.52\% &  5.84\% &  0.32\% &  5.84\% &  0.32\% \\
\midrule

\rowcolor{verylightgray}
\multicolumn{6}{c}{\textbf{Success Rate} $\uparrow$} \\
\midrule
Clean & 91.54\% & 92.15\% & 0.61\% & 91.69\% & 0.15\% \\
Heat & 91.07\% & 93.68\% & 2.61\% & 91.50\% & 0.43\% \\
Cool & 83.68\% & 83.68\% & 0.00\% & 82.74\% & -0.94\% \\
Pick & 96.08\% & 96.58\% & 0.50\% & 96.08\% & 0.00\% \\
Pick2 & 83.89\% & 83.15\% & -0.74\% & 82.90\% & -0.99\% \\
Look & 74.35\% & 80.52\% & 6.17\% & 75.00\% & 0.65\% \\
\bottomrule[1pt]
\end{tabular}
\end{center}
\end{table}

%% file: tex_tables/NL_multi.tex
\begin{table}[t]
\caption{Task success rate and exact action-sequence match compared with the baseline.}
\label{tab:NL_multi}
\begin{center}
\begin{tabular}{l|ccc|ccc}
\toprule[1pt]
\textbf{Task Type} 
& \multicolumn{3}{c|}{\textbf{Success Rate $\uparrow$}}
& \multicolumn{3}{c}{\textbf{Exact Match $\downarrow$}} \\
\cmidrule(lr){2-4} \cmidrule(lr){5-7}
& \textbf{Base Model} & \textbf{NL} & \textbf{$\Delta$}
& \textbf{Base Model} & \textbf{NL} & \textbf{$\Delta$} \\
\midrule
Clean & 91.54\% & 81.23\% & -10.31\% & 33.85\% & 16.15\% & -17.70\% \\
Heat                  & 91.07\% & 76.03\% & -15.04\% & 26.58\% & 6.54\%  & -20.04\% \\
Cool                  & 83.68\% & 62.48\% & -21.20\% & 19.70\% & 2.81\%  & -16.89\% \\
Pick                  & 96.08\% & 92.15\% & -3.93\%  & 32.78\% & 18.48\% & -14.30\% \\
Pick2                 & 83.89\% & 71.83\% & -12.06\% & 10.70\% & 2.95\%  & -7.75\%  \\
Look                  & 74.35\% & 64.94\% & -9.41\%  & 5.52\%  & 1.95\%  & -3.57\%  \\
\midrule
\textbf{Overall}      & \textbf{88.07\%} & \textbf{76.61\%} & \textbf{-11.46\%}
& \textbf{22.80\%} & \textbf{9.18\%} & \textbf{-13.62\%} \\
\bottomrule[1pt]
\end{tabular}
\end{center}
\end{table}

%% file: tex_figs/NL.tex
\begin{figure}
    \centering

\includegraphics[width=0.26\linewidth]{figures/NL_results/NL_EM.pdf}
\hspace{2mm}
\includegraphics[width=0.26\linewidth]{figures/NL_results/Attacked_NL_EM.pdf}
\hspace{2mm}
\includegraphics[width=0.26\linewidth]{figures/NL_results/Multi_NL_EM.pdf}

\caption{Exact Match results for NL in different unlearning settings. Left: NL under single-trajectory unlearning. Middle: attacked NL under single-trajectory unlearning. Right: NL under multi-trajectory unlearning setting.}
\label{fig:NL_EM}
\end{figure}

\begin{figure}
    \centering

\includegraphics[width=0.26\linewidth]{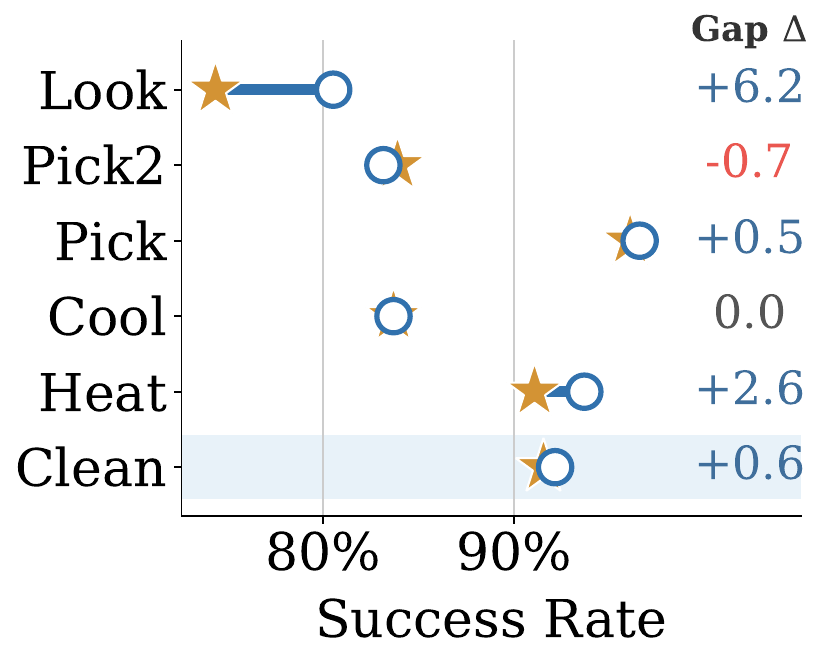}
\hspace{2mm}
\includegraphics[width=0.26\linewidth]{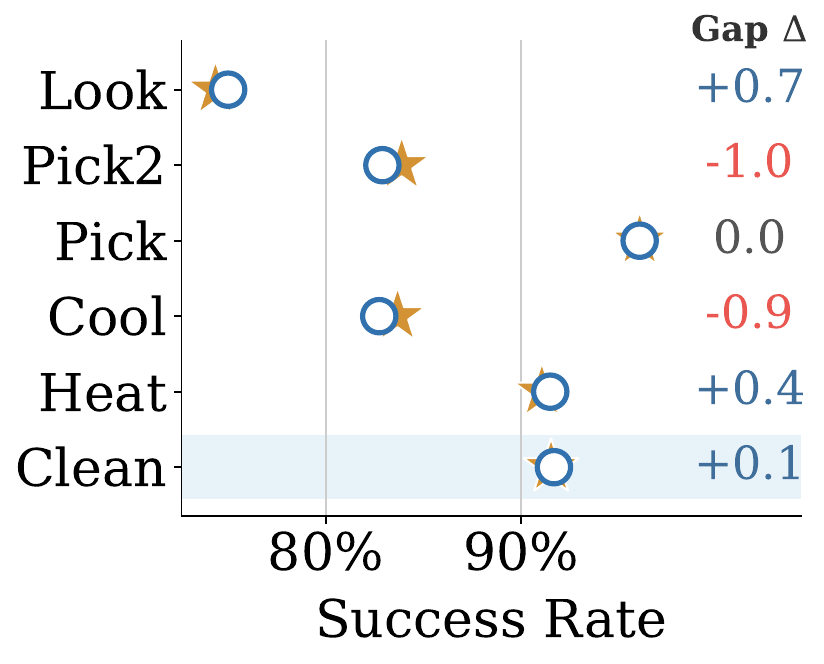}
\hspace{2mm}
\includegraphics[width=0.26\linewidth]{figures/NL_results/Multi_NL_SR.pdf}

\caption{Success Rate results for NL in different unlearning settings. Left: NL under single-trajectory unlearning. Middle: attacked NL under single-trajectory unlearning. Right: NL under multi-trajectory unlearning setting.}
\label{fig:NL_SR}
\end{figure}

%% file: tex_tables/NPO_v2.tex
\begin{table*}[t]
\caption{Comparison of task success rates between NPO v1 and NPO v2. $\Delta$ denotes the performance difference from the Base Model.}
\label{tab:success_npo_comparison}
\begin{center}
\resizebox{0.8\textwidth}{!}{
\begin{tabular}{lccccc}
\toprule[1pt]
\textbf{Task Type}
& \textbf{Base Model}
& \multicolumn{2}{c}{\textbf{NPO v1}}
& \multicolumn{2}{c}{\textbf{NPO v2}} \\
\cmidrule(lr){3-4}
\cmidrule(lr){5-6}
& & \textbf{Success Rate} & $\Delta$ & \textbf{Success Rate} & $\Delta$ \\
\midrule

Clean (Target Task)
& 91.54\% & 1.23\% & -90.31\% & 87.85\% & -3.69\% \\

Heat
& 91.07\% & 93.25\% & +2.18\% & 92.59\% & +1.52\% \\

Cool
& 83.68\% & 77.30\% & -6.38\% & 63.60\% & -20.08\% \\

Pick
& 96.08\% & 95.82\% & -0.26\% & 92.53\% & -3.55\% \\

Pick2
& 83.89\% & 82.66\% & -1.23\% & 73.43\% & -10.46\% \\

Look
& 74.35\% & 69.81\% & -4.54\% & 70.45\% & -3.90\% \\

\bottomrule[1pt]
\end{tabular}
}
\end{center}
\end{table*}

\begin{table*}[t]
\caption{Comparison of exact-match performance between NPO v1 and NPO v2. $\Delta$ denotes the performance difference from the Base Model.}
\label{tab:em_npo_comparison}
\begin{center}
\resizebox{0.8\textwidth}{!}{
\begin{tabular}{lccccc}
\toprule[1pt]
\textbf{Task Type}
& \textbf{Base Model}
& \multicolumn{2}{c}{\textbf{NPO v1}}
& \multicolumn{2}{c}{\textbf{NPO v2}} \\
\cmidrule(lr){3-4}
\cmidrule(lr){5-6}
& & \textbf{Exact Match} & $\Delta$ & \textbf{Exact Match} & $\Delta$ \\
\midrule

Clean (Target Task)
& 33.85\% & 0.46\% & -33.39\% & 12.00\% & -21.85\% \\

\bottomrule[1pt]
\end{tabular}
}
\end{center}
\end{table*}






%% file: tex_tables/GA_ALFWorld.tex
\begin{table*}[t]
\caption{Comparison of Success Rate between GA ($\beta=1.0$) and GA ($\beta=10.0$). $\Delta$ denotes the performance difference from the base model.}
\label{tab:success_ga_comparison}
\begin{center}
\resizebox{0.8\textwidth}{!}{
\begin{tabular}{lccccc}
\toprule[1pt]
\textbf{Task Type}
& \textbf{Base Model}
& \multicolumn{2}{c}{\textbf{GA ($\gamma=1.0$)}}
& \multicolumn{2}{c}{\textbf{GA ($\gamma=10.0$)}} \\
\cmidrule(lr){3-4}
\cmidrule(lr){5-6}
& & \textbf{Success Rate} & $\Delta$ & \textbf{Success Rate} & $\Delta$ \\
\midrule

Clean (Target Task)
& 91.50\% & 0.00\% & -91.50\% & 0.92\% & -90.58\% \\

Heat
& 91.10\% & 6.75\% & -84.35\% & 73.20\% & -17.90\% \\

Cool
& 83.70\% & 5.44\% & -78.26\% & 63.23\% & -20.47\% \\

Pick
& 96.10\% & 80.51\% & -15.59\% & 91.77\% & -4.33\% \\

Pick2
& 83.90\% & 21.53\% & -62.37\% & 75.40\% & -8.50\% \\

Look
& 74.40\% & 17.21\% & -57.19\% & 68.83\% & -5.57\% \\

\bottomrule[1pt]
\end{tabular}
}
\end{center}
\end{table*}

\begin{table*}[t]
\caption{Comparison of Exact Match performance between GA ($\beta=1.0$) and GA ($\beta=10.0$). $\Delta$ denotes the performance difference from the Base Model.}
\label{tab:em_ga_comparison}
\begin{center}
\resizebox{0.8\textwidth}{!}{
\begin{tabular}{lccccc}
\toprule[1pt]
\textbf{Task Type}
& \textbf{Base Model}
& \multicolumn{2}{c}{\textbf{GA ($\gamma=1.0$)}}
& \multicolumn{2}{c}{\textbf{GA ($\gamma=10.0$)}} \\
\cmidrule(lr){3-4}
\cmidrule(lr){5-6}
& & \textbf{Exact Match} & $\Delta$ & \textbf{Exact Match} & $\Delta$ \\
\midrule

Clean (Target Task)
& 33.90\% & 0.00\% & -33.90\% & 0.20\% & -33.70\% \\






\bottomrule[1pt]
\end{tabular}
}
\end{center}
\end{table*}

%% file: tex_tables/OPD_ALFWorld.tex
\begin{table*}[htbp]
\caption{The results for OPD.}
\label{tab:OPD_ALFWorld}
\begin{center}
\resizebox{\textwidth}{!}{
\begin{tabular}{lcccccccc}
\toprule[1pt]
& \multicolumn{6}{c}{\textbf{Target Tasks}}
& \multicolumn{2}{c}{\textbf{Untarget Tasks}} \\
\cmidrule(lr){2-7}
\cmidrule(lr){8-9}
\textbf{Methods}
& \textbf{Success Rate} $\uparrow$ & $\Delta$
& \textbf{Exact Match} $\downarrow$ & $\Delta$
& \textbf{LLM as Judge} $\downarrow$ & $\Delta$
& \textbf{Success Rate} $\uparrow$ & $\Delta$ \\
\midrule

Base Model  & 91.50\% & \textemdash & 33.90\% & \textemdash & 83.00\% & \textemdash & 87.30\% & \textemdash \\
\midrule

OPD          & 81.10\%  & -10.40\% & 15.12\%  & -18.78\% & 65.70\% & -17.30\% & 88.10\% & 0.80\% \\

\bottomrule[1pt]
\end{tabular}
}
\end{center}
\end{table*}

%% file: secs/6_4_theorem.tex

\subsection{Theoretical Analysis of Isolated Normalization in GiRPO}
\label{app:theory}

We formalize the claim made in Section~\ref{sec:method} and illustrated in Table~\ref{tab:advantage_comparison}. Including the injected forget trajectory in the group normalization statistics, as vanilla GRPO would, introduces a systematic and non-vanishing bias into the advantage of every real rollout in the group, while the isolated normalization used by GiRPO does not. We also justify the necessity of the lower clip $\lambda$.

\paragraph{Setup.} Fix a target task $g_i \in \mathcal{G}_{\text{target}}$ and drop the task subscript $i$ for brevity. Let $r^{1}, \ldots, r^{G}$ denote the episode-level rewards of the $G$ real rollouts retained in the group, and let $\mu$ and $\sigma$ be the statistics computed over these real rollouts only, as in Eq.~\eqref{eq:isolated_stats}. Let $r^{f} = \min_{l} r^{l} - \delta$ with $\delta > 0$ be the reward assigned to the injected forget trajectory by Eq.~\eqref{eq:reward_penalty}, and define $\Delta := r^{f} - \mu$. Since $\min_{l} r^{l} \le \mu$ by definition of the mean,
\begin{equation}
\label{eq:delta_bound}
\Delta = r^{f} - \mu \le -\delta, \qquad \text{so} \quad |\Delta| \ge \delta .
\end{equation}
We compare two ways of computing advantages for the group $\mathcal{T} = \{r^{1}, \ldots, r^{G}, r^{f}\}$. Under \emph{isolated normalization}, which is what GiRPO uses, $\mu$ and $\sigma$ come from the $G$ real rewards only, the real advantages are $\hat{A}^{l} = (r^{l} - \mu)/\sigma$ as in Eq.~\eqref{eq:adv_real}, and the forget advantage is $\hat{A}^{f} = \max\bigl((r^{f} - \mu)/\sigma,\, -\lambda\bigr)$ as in Eq.~\eqref{eq:adv_forget}. Under \emph{naive inclusion}, which is vanilla GRPO applied to the augmented group, the statistics $\mu'$ and $\sigma'$ are computed over all $G+1$ rewards, giving real advantages $A'^{l} = (r^{l} - \mu')/\sigma'$. We take $\epsilon \to 0$ throughout for clarity, and all statements below hold with an $O(\epsilon)$ correction otherwise.

\begin{proposition}[Isolated normalization is exact]
\label{prop:exact}
For every $l = 1, \ldots, G$, the advantage $\hat{A}^{l}$ equals the advantage that vanilla GRPO would compute on the group $\{r^{1}, \ldots, r^{G}\}$ in the absence of any forget trajectory injection.
\end{proposition}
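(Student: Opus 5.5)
The argument is essentially definitional: I would show that every quantity entering $\hat{A}^{l}$ is a function of the real rewards $r^{1},\ldots,r^{G}$ alone, and that this function coincides with the one vanilla GRPO applies to the uninjected group. First I write out what vanilla GRPO computes on $\{r^{1},\ldots,r^{G}\}$ without injection. Its group mean is $\frac{1}{G}\sum_{l=1}^{G} r^{l}$ and its population standard deviation is $\sqrt{\frac{1}{G}\sum_{l=1}^{G}(r^{l}-\mu)^2}$. Each rollout then receives the advantage $(r^{l}-\mu)/(\sigma+\epsilon)$. Comparing term by term with Eq.~\eqref{eq:isolated_stats}, these statistics are exactly the $\mu$ and $\sigma$ used by GiRPO, since both sums range only over $l=1,\ldots,G$.

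Second, I check that the injection does not leak into these statistics, which is the only point requiring care. The forget reward $r^{f}$ depends on the real rewards through $\min_l r^{l}-\delta$, but the dependence runs only one way: $r^{f}$ never appears in Eq.~\eqref{eq:isolated_stats} or Eq.~\eqref{eq:adv_real}. The second subtlety is the discarding rule. Any real rollout coinciding with a forget trajectory is removed, and I would interpret $r^{1},\ldots,r^{G}$ as the rewards of the rollouts that remain, as the Setup states ("retained in the group"). The comparison is then against vanilla GRPO on that same retained group, and the identity holds with the same $G$ on both sides.

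Third, I substitute: $\hat{A}^{l}=(r^{l}-\mu)/(\sigma+\epsilon)$ is literally the vanilla GRPO advantage for rollout $l$. This holds for every $\epsilon\ge 0$, so no $O(\epsilon)$ correction is needed here. When $\epsilon\to 0$ and $\sigma=0$, both sides are undefined, or both equal the same conventional value (e.g.\ $0$, as in Table~\ref{tab:advantage_comparison}), so I would note that the degenerate case is handled identically.

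I expect no genuine obstacle. The main risk is stating precisely what "the group in the absence of injection" means given the discarding rule, and I would settle it with the retained-group interpretation above. The remaining consequence follows directly: the real-rollout terms of the surrogate in Eq.~\eqref{eq:objective} coincide with GRPO's, apart from the $1/|\mathcal{T}_i|$ weighting. I would flag that weighting as a change of scale only, not of advantage values, and note that it lies outside the proposition's claim.
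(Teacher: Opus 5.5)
Your proposal is correct and follows the paper's argument: $\mu$ and $\sigma$ are by construction the statistics of the real rollouts alone and $r^{f}$ never enters them, so $\hat{A}^{l}$ is literally the vanilla GRPO advantage on $\{r^{1},\ldots,r^{G}\}$. Your additional points go beyond the paper's one-line proof but are sound: reading $r^{1},\ldots,r^{G}$ as the retained rollouts after discarding, as the Setup does, noting that the identity holds for any $\epsilon \ge 0$, and observing that the degenerate case $\sigma = 0$ is treated identically on both sides.
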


\begin{proof}
Immediate, since $\mu$ and $\sigma$ in the isolated estimator are by construction the group statistics of the real rollouts alone.
\end{proof}

Proposition~\ref{prop:exact} confirms the third row of Table~\ref{tab:advantage_comparison}. Isolating the normalization statistics is not only a stabilizing heuristic, it recovers the untouched GRPO advantage for every real trajectory. The outer factor $1/|\mathcal{T}_i|$ in Eq.~\eqref{eq:objective} rescales the whole group by a task-dependent constant, which leaves the relative weighting among real rollouts unchanged.

\begin{proposition}[Mean shift under naive inclusion]
\label{prop:mean}
The pooled mean satisfies $\mu' = \mu + \Delta/(G+1)$, hence $\mu - \mu' \ge \delta/(G+1) > 0$.
\end{proposition}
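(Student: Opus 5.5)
The plan is a direct computation from the definition of the pooled mean over the augmented group $\mathcal{T} = \{r^{1}, \ldots, r^{G}, r^{f}\}$. First I write $\mu'$ explicitly as the average of all $G+1$ rewards, and I split the sum into the real part and the injected reward:
\begin{equation*}
\mu' = \frac{1}{G+1}\Bigl(\sum_{l=1}^{G} r^{l} + r^{f}\Bigr) = \frac{G\mu + r^{f}}{G+1},
\end{equation*}
where I use $\sum_{l} r^{l} = G\mu$ from Eq.~\eqref{eq:isolated_stats}.

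Next I substitute $r^{f} = \mu + \Delta$, which is just the definition of $\Delta$. The numerator becomes $(G+1)\mu + \Delta$, so $\mu' = \mu + \Delta/(G+1)$. This is the claimed identity.

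For the inequality I rearrange the identity to $\mu - \mu' = -\Delta/(G+1)$. Then I invoke Eq.~\eqref{eq:delta_bound}, which gives $\Delta \le -\delta$ and hence $-\Delta \ge \delta$. Since $G+1 > 0$, dividing preserves the inequality, so $\mu - \mu' \ge \delta/(G+1)$. Strict positivity follows from $\delta > 0$.

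No step is a real obstacle. The only point to check is that Eq.~\eqref{eq:delta_bound} applies, and it does because $\min_l r^{l} \le \mu$ holds for any finite collection, including the degenerate case where all rewards are equal. In that case $\Delta = -\delta$ exactly and the bound is tight. The argument does not involve $\sigma$ or $\epsilon$, so the identity holds exactly and needs no $O(\epsilon)$ correction.
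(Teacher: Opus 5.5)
Your proof is correct and follows the paper's own argument: write $\mu' = (G\mu + r^{f})/(G+1)$, rewrite it as $\mu + \Delta/(G+1)$, and conclude with the bound $\Delta \le -\delta$. Your added remarks on the tight case $\Delta = -\delta$ and on the identity being exact, with no $\epsilon$-correction, are accurate.
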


\begin{proof}
By definition $\mu' = (G\mu + r^{f})/(G+1) = \mu + (r^{f} - \mu)/(G+1) = \mu + \Delta/(G+1)$. The bound follows from Eq.~\eqref{eq:delta_bound}.
\end{proof}

\begin{proposition}[Variance inflation under naive inclusion]
\label{prop:var}
The pooled variance satisfies
\begin{equation}
\label{eq:pooled_var}
\sigma'^{2} = \frac{G}{G+1}\,\sigma^{2} + \frac{G}{(G+1)^{2}}\,\Delta^{2}.
\end{equation}
\end{proposition}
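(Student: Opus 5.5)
The plan is a direct computation of the population (divide-by-$(G+1)$) variance of the augmented group $\{r^1,\dots,r^G,r^f\}$. We write it as the average of squared deviations from the pooled mean $\mu'$ and reexpress everything in terms of the isolated statistics $\mu,\sigma$ and the offset $\Delta = r^f-\mu$. The only earlier input needed is Proposition~\ref{prop:mean}, which gives $\mu' = \mu + \Delta/(G+1)$. It is convenient to set $c := \Delta/(G+1)$, so that $\mu'=\mu+c$.

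First we split the pooled sum into the real part and the forget part:
\begin{equation*}
(G+1)\sigma'^2 = \sum_{l=1}^{G}(r^l-\mu-c)^2 + (r^f-\mu-c)^2 .
\end{equation*}

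For the real part, expand around $\mu$. The cross term $-2c\sum_l (r^l-\mu)$ vanishes because $\mu$ is the mean of the real rewards. The remaining terms give $\sum_l (r^l-\mu)^2 + Gc^2 = G\sigma^2 + Gc^2$, using the definition of $\sigma$ in Eq.~\eqref{eq:isolated_stats}.

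For the forget part, $r^f-\mu-c = \Delta - \Delta/(G+1) = G\Delta/(G+1)$, so its square is $G^2\Delta^2/(G+1)^2$.

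Adding the two parts gives
\begin{equation*}
(G+1)\sigma'^2 = G\sigma^2 + \frac{G\Delta^2}{(G+1)^2} + \frac{G^2\Delta^2}{(G+1)^2} = G\sigma^2 + \frac{G\Delta^2}{G+1}.
\end{equation*}
Dividing by $G+1$ yields Eq.~\eqref{eq:pooled_var}.

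None of these steps is a real obstacle; the argument is the standard parallel-axis decomposition. The one point to get right is the normalization convention. The paper uses the $1/G$ population variance in Eq.~\eqref{eq:isolated_stats}, and the pooled statistic must likewise use $1/(G+1)$. With an unbiased $1/(G-1)$ estimator the coefficients would change. As a sanity check, the example in Table~\ref{tab:advantage_comparison}, with $G=4$, $\sigma=0$ and $\Delta=-0.1$, gives $\sigma'=\tfrac{2}{5}\cdot 0.1 = 0.04$. This reproduces the advantages $+0.5$ for the real rollouts and $-2.0$ for the forget trajectory shown there.
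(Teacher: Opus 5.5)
Your computation is correct and is essentially the paper's argument: the paper quotes the two-group pooled sum-of-squares identity $\mathrm{SS} = G\sigma^{2} + \frac{G}{G+1}\Delta^{2}$ and divides by $G+1$, while you derive that same identity inline by expanding around $\mu' = \mu + \Delta/(G+1)$ from Proposition~\ref{prop:mean}. Your remark that the pooled statistic must use the $1/(G+1)$ population convention is accurate, and so is your numerical check against Table~\ref{tab:advantage_comparison}.
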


\begin{proof}
Treat the $G$ real rewards, with mean $\mu$ and sum of squares $G\sigma^{2}$, and the single point $r^{f}$, with sum of squares $0$, as two groups being pooled. The standard pooled sum of squares identity gives $\mathrm{SS} = G\sigma^{2} + \frac{G \cdot 1}{G+1}(\mu - r^{f})^{2} = G\sigma^{2} + \frac{G}{G+1}\Delta^{2}$, and $\sigma'^{2} = \mathrm{SS}/(G+1)$ gives Eq.~\eqref{eq:pooled_var}.
\end{proof}

Since $\Delta^{2} \ge \delta^{2}$ by Eq.~\eqref{eq:delta_bound}, Proposition~\ref{prop:var} shows that $\sigma'^{2}$ is inflated by an amount that grows with the penalty $\delta$. Even when $\sigma = 0$, meaning all real rollouts receive the same reward, naive inclusion produces $\sigma' > 0$ purely as an artifact of the injected forget trajectory.

\begin{theorem}[Bias decomposition of the naive advantage]
\label{thm:bias}
For every real trajectory $l$,
\begin{equation}
\label{eq:bias_decomp}
A'^{l} = \underbrace{\frac{\sigma}{\sigma'}\,\hat{A}^{l}}_{\text{rescaled correct signal}} \;+\; \underbrace{\frac{c}{\sigma'}}_{\text{spurious offset}}, \qquad c := \mu - \mu' = \frac{|\Delta|}{G+1} > 0 .
\end{equation}
\end{theorem}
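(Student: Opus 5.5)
The plan is a direct algebraic rewriting of the naive advantage, using the isolated statistics as the reference point. Starting from $A'^{l} = (r^{l} - \mu')/\sigma'$, I add and subtract $\mu$ in the numerator:
\[
r^{l} - \mu' = (r^{l} - \mu) + (\mu - \mu').
\]
Dividing by $\sigma'$ splits $A'^{l}$ into $(r^{l}-\mu)/\sigma'$ and $(\mu-\mu')/\sigma'$. For the first term I multiply and divide by $\sigma$, which gives $\frac{\sigma}{\sigma'}\cdot\frac{r^{l}-\mu}{\sigma} = \frac{\sigma}{\sigma'}\hat{A}^{l}$ by Eq.~\eqref{eq:adv_real} with $\epsilon \to 0$. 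The second term is $c/\sigma'$ by definition of $c$.

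Next I identify $c$. By Proposition~\ref{prop:mean}, $\mu' = \mu + \Delta/(G+1)$, so $c = \mu - \mu' = -\Delta/(G+1)$. Eq.~\eqref{eq:delta_bound} gives $\Delta \le -\delta < 0$, hence $-\Delta = |\Delta|$ and $c = |\Delta|/(G+1) \ge \delta/(G+1) > 0$. To make sure the division by $\sigma'$ is legitimate, I invoke Proposition~\ref{prop:var}. Since $\Delta^{2} \ge \delta^{2} > 0$, we get $\sigma'^{2} \ge G\delta^{2}/(G+1)^{2} > 0$. So $\sigma' > 0$ always, even when all real rewards coincide.

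The main obstacle is the degenerate case $\sigma = 0$. There $\hat{A}^{l}$ as a ratio is undefined in the $\epsilon \to 0$ limit, since it has the form $0/0$. I would handle it by interpreting $\frac{\sigma}{\sigma'}\hat{A}^{l}$ as $(r^{l}-\mu)/\sigma'$, which equals $0$ when $\sigma=0$ because then every $r^{l}=\mu$. Alternatively, one can keep $\epsilon > 0$, where $\hat{A}^{l} = 0$ exactly, and accept an $O(\epsilon)$ correction as stated in the setup. Either way the decomposition reduces to $A'^{l} = c/\sigma'$.

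As a sanity check, this reproduces the $+0.5$ entries in Table~\ref{tab:advantage_comparison}. With $G=4$ and $\Delta = -0.1$, we get $c = 0.02$ and $\sigma' = \sqrt{4\cdot 0.01/25} = 0.04$, so $c/\sigma' = 0.5$.
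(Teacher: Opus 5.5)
Your proposal is correct and follows the paper's proof. The paper likewise adds and subtracts $\mu$ in the numerator of $A'^{l}$, divides by $\sigma'$, and reads off $c = -\Delta/(G+1) = |\Delta|/(G+1)$ from Proposition~\ref{prop:mean} and the bound $\Delta \le -\delta$. Your extra checks go beyond what the paper's proof states: you show $\sigma' > 0$ via Proposition~\ref{prop:var}, and you treat the degenerate case $\sigma = 0$ separately, where multiplying and dividing by $\sigma$ is not literally valid. Both refinements are sound.
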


\begin{proof}
Write $r^{l} - \mu' = (r^{l} - \mu) + (\mu - \mu') = (r^{l} - \mu) + c$. Dividing by $\sigma'$ and substituting $\hat{A}^{l} = (r^{l} - \mu)/\sigma$ gives Eq.~\eqref{eq:bias_decomp}.
\end{proof}

Equation~\eqref{eq:bias_decomp} is the key structural result. The naive estimator is a rescaled copy of the isolated advantage plus an additive term that is strictly positive for every real trajectory, regardless of its own reward. In particular, whenever a real trajectory ties the group mean and $\hat{A}^{l} = 0$, which is the common case once policy optimization has driven the task success rate close to saturation, naive inclusion still assigns it a strictly positive advantage $c/\sigma' > 0$ and reinforces a trajectory that carries no information about relative quality within its group.

\begin{corollary}[The spurious offset does not vanish]
\label{cor:offset}
Suppose $\sigma \le \delta$, that is, the reward variance within the real rollout group does not exceed the forget penalty gap. This condition holds increasingly often as training progresses and within-group reward variance shrinks on tasks the policy has learned to solve reliably. Then $c/\sigma' > 1/(G+1)$.
\end{corollary}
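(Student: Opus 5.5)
The plan is to reduce the claim to a comparison between $|\Delta|$ and $\sigma'$, using the closed forms already derived. By Theorem~\ref{thm:bias}, $c = |\Delta|/(G+1)$, so
\[
\frac{c}{\sigma'} = \frac{1}{G+1}\cdot\frac{|\Delta|}{\sigma'}.
\]
The corollary is therefore equivalent to the single inequality $\sigma' < |\Delta|$, or equivalently $\sigma'^{2} < \Delta^{2}$. Before dividing by $\sigma'$, I will check that it is strictly positive. Proposition~\ref{prop:var} gives $\sigma'^{2} \ge \frac{G}{(G+1)^{2}}\Delta^{2}$, and Eq.~\eqref{eq:delta_bound} gives $\Delta^{2} \ge \delta^{2} > 0$. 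So $\sigma' > 0$, and the ratio is well defined even when $\sigma = 0$.

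The key step is to bound the pooled variance from Proposition~\ref{prop:var} using the hypothesis. The hypothesis $\sigma \le \delta$, combined with $\delta \le |\Delta|$ from Eq.~\eqref{eq:delta_bound}, gives $\sigma^{2} \le \Delta^{2}$. Substituting this into Eq.~\eqref{eq:pooled_var} yields
\[
\sigma'^{2} \le \frac{G}{G+1}\Delta^{2} + \frac{G}{(G+1)^{2}}\Delta^{2} = \frac{G(G+2)}{(G+1)^{2}}\,\Delta^{2}.
\]
The right-hand side can be rewritten as
\[
\frac{G(G+2)}{(G+1)^{2}}\,\Delta^{2} = \Bigl(1 - \frac{1}{(G+1)^{2}}\Bigr)\Delta^{2},
\]
because $G(G+2) = (G+1)^{2} - 1$.

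Since $\Delta^{2} > 0$, this bound is strictly less than $\Delta^{2}$. Hence $\sigma' < |\Delta|$, and so $c/\sigma' > 1/(G+1)$. The argument in fact gives the slightly stronger quantitative bound
\[
\frac{c}{\sigma'} \ge \frac{1}{\sqrt{G(G+2)}} .
\]

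I do not expect a genuine obstacle here. The only points needing care are strictness, which comes from $\delta > 0$ rather than from the inequality $\sigma \le \delta$, and the nonvanishing of $\sigma'$, which is handled in the first step. With $\epsilon > 0$, the same chain of inequalities goes through with the $O(\epsilon)$ correction stated in the setup.
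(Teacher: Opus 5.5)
Your proposal is correct and matches the paper's proof essentially line for line. Like the paper, you use $\sigma \le \delta \le |\Delta|$ to bound the pooled variance by $\sigma'^{2} \le \frac{G^{2}+2G}{(G+1)^{2}}\Delta^{2} < \Delta^{2}$, and then conclude $c/\sigma' = |\Delta|/((G+1)\sigma') > 1/(G+1)$; your explicit check that $\sigma' > 0$ and the sharper bound $c/\sigma' \ge 1/\sqrt{G(G+2)}$ are valid additions that the paper does not state.
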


\begin{proof}
Since $\sigma \le \delta \le |\Delta|$, Proposition~\ref{prop:var} gives $\sigma'^{2} \le \Delta^{2}\bigl(\frac{G}{G+1} + \frac{G}{(G+1)^{2}}\bigr) = \Delta^{2}\,\frac{G^{2}+2G}{(G+1)^{2}} < \Delta^{2}$, hence $\sigma' < |\Delta|$ and $c/\sigma' = \frac{|\Delta|/(G+1)}{\sigma'} > \frac{1}{G+1}$.
\end{proof}

The bias injected into every real trajectory's advantage is therefore bounded below by a constant that depends only on the group size $G$, independent of the specific reward realization.

\begin{corollary}[Recovering Table~\ref{tab:advantage_comparison}]
\label{cor:table}
In the homogeneous regime $\sigma = 0$, we have $\Delta = -\delta$ exactly, and Propositions~\ref{prop:mean} and \ref{prop:var} together with Theorem~\ref{thm:bias} give the closed forms
\begin{equation}
\label{eq:closed_form}
\sigma' = \frac{\delta\sqrt{G}}{G+1}, \qquad A'^{l} = \frac{1}{\sqrt{G}}, \qquad A'^{f} = -\sqrt{G}, \qquad \hat{A}^{l} = 0 .
\end{equation}
\end{corollary}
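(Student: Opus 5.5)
The plan is to specialize each of the earlier results to $\sigma = 0$ and substitute. There are four steps, and the only delicate one is how the real advantage $\hat{A}^{l}$ is defined when $\sigma = 0$.

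\textbf{Step 1: pin down $\Delta$.} When $\sigma = 0$, every real reward equals $\mu$. Hence $\min_l r^{l} = \mu$, and so $r^{f} = \mu - \delta$. This gives $\Delta = -\delta$ exactly, sharpening Eq.~\eqref{eq:delta_bound} to an equality, with $|\Delta| = \delta$.

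\textbf{Step 2: the pooled standard deviation.} Substituting $\sigma = 0$ and $\Delta^{2} = \delta^{2}$ into Proposition~\ref{prop:var} gives
\[
\sigma'^{2} = \frac{G}{(G+1)^{2}}\,\delta^{2},
\qquad\text{so}\qquad
\sigma' = \frac{\delta\sqrt{G}}{G+1}.
\]
This quantity is positive, so the naive advantages are well defined with no $\epsilon$ needed.

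\textbf{Step 3: the real advantage $\hat{A}^{l}$.} Here $r^{l} - \mu = 0$ for every $l$. Before taking the $\epsilon \to 0$ limit, the isolated advantage is $0/(0+\epsilon) = 0$, so $\hat{A}^{l} = 0$. This is the main point to treat carefully: the setup's convention $\epsilon \to 0$ would otherwise produce $0/0$. I would resolve it by stating explicitly that the limit is taken after evaluating the numerator, which is identically zero.

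\textbf{Step 4: the naive advantages.} For a real trajectory I would apply Theorem~\ref{thm:bias}. The rescaled term vanishes because $\hat{A}^{l} = 0$ (and $\sigma = 0$), leaving
\[
A'^{l} = \frac{c}{\sigma'}, \qquad c = \frac{|\Delta|}{G+1} = \frac{\delta}{G+1}.
\]
Dividing by $\sigma'$ gives $A'^{l} = 1/\sqrt{G}$. For the forget trajectory I would compute directly. Proposition~\ref{prop:mean} gives $\mu' = \mu - \delta/(G+1)$, so
\[
r^{f} - \mu' = -\delta + \frac{\delta}{G+1} = -\frac{\delta G}{G+1},
\qquad
A'^{f} = \frac{-\delta G/(G+1)}{\delta\sqrt{G}/(G+1)} = -\sqrt{G}.
\]

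\textbf{Consistency check.} As a sanity check on the algebra, I would confirm that the pooled advantages have zero mean:
\[
G\cdot\frac{1}{\sqrt{G}} - \sqrt{G} = 0.
\]
I would then check the values against Table~\ref{tab:advantage_comparison}. With $G = 4$ the formulas give $A'^{l} = 1/2$ and $A'^{f} = -2$, which match the second row. The GiRPO forget advantage is $\max(-\delta/\epsilon, -\lambda) = -\lambda = -0.1$, which matches the third row. This last value is exactly where the lower clip becomes necessary.
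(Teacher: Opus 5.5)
Your proposal is correct and follows the paper's route: set $\sigma = 0$ (so $\Delta = -\delta$), substitute into Propositions~\ref{prop:mean} and~\ref{prop:var} and Theorem~\ref{thm:bias}, and compute $A'^{f} = (r^{f}-\mu')/\sigma'$ directly, since the theorem covers only real rollouts. Your explicit treatment of $\hat{A}^{l}$, evaluating the identically zero numerator before letting $\epsilon \to 0$, is a useful clarification of the $0/0$ that the paper's $\epsilon \to 0$ convention leaves implicit.
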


For $G = 4$ this yields $A'^{l} = 0.5$ and $A'^{f} = -2.0$, which reproduces the naive inclusion row of Table~\ref{tab:advantage_comparison}, while the isolated estimator correctly assigns zero advantage to every real rollout.

\begin{proposition}[Necessity of the clip $\lambda$]
\label{prop:clip}
In the same homogeneous regime, the unclipped isolated forget advantage diverges,
\begin{equation}
\label{eq:divergence}
\frac{r^{f} - \mu}{\sigma + \epsilon} = \frac{-\delta}{\epsilon} \xrightarrow[\epsilon \to 0]{} -\infty ,
\end{equation}
and more generally $\bigl|(r^{f} - \mu)/(\sigma + \epsilon)\bigr| \to \infty$ as $\sigma \to 0$ for any fixed $\epsilon > 0$.
\end{proposition}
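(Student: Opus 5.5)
The plan is to evaluate the raw isolated forget advantage directly from the definitions in Eq.~\eqref{eq:isolated_stats} and Eq.~\eqref{eq:reward_penalty}, treating the homogeneous case and the general case separately.

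\textbf{Homogeneous case.} Here all real rewards equal a common value $r$. Then $\mu = r$ and $\min_l r^l = r$, so $\sigma = 0$ exactly. Hence $r^f = r - \delta$, and $\Delta = r^f - \mu = -\delta$ holds with equality rather than only the inequality of Eq.~\eqref{eq:delta_bound}. Substituting into the unclipped expression appearing inside Eq.~\eqref{eq:adv_forget} gives $(r^f-\mu)/(\sigma+\epsilon) = -\delta/\epsilon$. Since $\delta>0$ is fixed, this tends to $-\infty$ as $\epsilon \to 0^{+}$, which is the first claim.

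\textbf{General case.} Fix $\epsilon>0$ and consider a sequence of real reward groups with $\sigma \to 0$. I will not assume $\epsilon \to 0$ here. The key input is Eq.~\eqref{eq:delta_bound}, $|\Delta| \ge \delta$, which holds for every realization because $\min_l r^l \le \mu$. This gives the uniform lower bound
\[
\left|\frac{r^f-\mu}{\sigma+\epsilon}\right| \ge \frac{\delta}{\sigma+\epsilon}.
\]
The intended conclusion is that this quantity grows without bound as $\sigma$ shrinks. One point needs care: with $\epsilon$ fixed and positive, the right-hand side tends only to $\delta/\epsilon$. So the literal reading, divergence for fixed $\epsilon$, would have to be interpreted as blow-up in the scale $\delta/(\sigma+\epsilon) \to \delta/\epsilon$, which is arbitrarily large when $\epsilon$ is a small stabilizer. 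The alternative is to read it in the regime $\epsilon \to 0$ adopted throughout this section, where $\delta/\sigma \to \infty$.

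I will present the argument under the section's standing convention $\epsilon \to 0$. In that regime the bound $\delta/\sigma \to \infty$ gives genuine divergence. I will then remark that for fixed $\epsilon$ the magnitude is bounded only by $\delta/\epsilon$. That bound is unbounded in the stabilizer and uncontrolled by any training quantity, which is what motivates the explicit cap at $-\lambda$.

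\textbf{Consequence for the clip.} The clipped value $\hat{A}^f = \max(\cdot, -\lambda)$ lies in $[-\lambda, 0)$. Negativity holds because $\Delta<0$. The lower endpoint holds because the raw value is eventually below $-\lambda$, so the clip is active in the near-homogeneous regime. Thus clipping yields the bounded, sign-preserving signal claimed in Section~\ref{sec:method}.

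The main obstacle is making the fixed-$\epsilon$ statement honest, as discussed under the general case. Everything else is substitution.
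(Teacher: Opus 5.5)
Your homogeneous-case computation matches the paper exactly. The paper gives no separate proof of this proposition; the displayed identity, obtained by substituting $\sigma=0$ and $\Delta=-\delta$, is its entire justification.

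For the second clause the paper offers only the assertion, and your unease is warranted. You can go further than reinterpreting it: as literally stated (fixed $\epsilon>0$, $\sigma\to0$) the clause is false. Put $m:=\mu-\min_l r^l\ge 0$. The deviations $r^l-\mu$ sum to zero and have sum of squares $G\sigma^2$. Applying Cauchy--Schwarz to the $G-1$ deviations other than the minimal one gives, for $G\ge 2$, $G\sigma^2\ge m^2+m^2/(G-1)$, i.e.\ $m\le\sqrt{G-1}\,\sigma$. Hence $\delta\le|\Delta|=\delta+m\le\delta+\sqrt{G-1}\,\sigma$, and for fixed $\epsilon>0$
\[
\frac{\delta}{\sigma+\epsilon}\;\le\;\left|\frac{r^f-\mu}{\sigma+\epsilon}\right|\;\le\;\frac{\delta+\sqrt{G-1}\,\sigma}{\sigma+\epsilon}\;\xrightarrow[\sigma\to 0]{}\;\frac{\delta}{\epsilon}.
\]
So the raw forget advantage converges to the finite value $-\delta/\epsilon$. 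The upper half of this sandwich is what your remark that the magnitude is ``bounded only by $\delta/\epsilon$'' needs to become a proved statement; your lower bound alone cannot show that.

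The true general claim is the one you actually prove under the section's convention $\epsilon\to0$: $|\Delta|/\sigma\ge\delta/\sigma\to\infty$ as $\sigma\to0^{+}$. With a fixed stabilizer, the correct content is that the raw advantage approaches $-\delta/\epsilon$. That value is huge for a numerical $\epsilon$ and is not controlled by any training quantity, which is what motivates the clip.

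There is one small slip in your last paragraph. The bound $\hat{A}^f\ge-\lambda$ holds for every realization simply by definition of the $\max$. What the raw value falling below $-\lambda$ gives you is that the clip is \emph{active}, i.e.\ $\hat{A}^f=-\lambda$. With fixed $\epsilon$, this happens for all sufficiently small $\sigma$ provided $\delta/\epsilon>\lambda$, which holds for any realistic stabilizer.
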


Proposition~\ref{prop:clip} shows that the regime in which isolated normalization is most valuable for real trajectories, namely near-zero within-group variance as described in Corollary~\ref{cor:table}, is also the regime in which the unclipped forget advantage is numerically unstable. The floor in Eq.~\eqref{eq:adv_forget} is therefore a necessary complement to isolating the statistics rather than an incidental trick. Theorem~\ref{thm:bias} and Corollary~\ref{cor:offset} motivate excluding the forget trajectory from the group statistics in order to remove the bias on real trajectories, and Proposition~\ref{prop:clip} motivates bounding the resulting forget signal in order to keep the unlearning gradient finite. These two design choices together account for the stable training dynamics of GiRPO.

%% file: tex_code/LLM.tex
\begin{figure}[!ht]
\centering
\begin{tcolorbox}[
    colback=white,
    colframe=black,
    boxrule=0.5pt,
    title={Prompt Templates for LLM-as-Judge}
]
\vspace{0.5em}

\begin{Verbatim}[
    fontsize=\CaseStudySize,
    breaklines=true,
    breakbytoken=true,
    xleftmargin=1em
]
You are an evaluator. Your task is to compare two agent execution trajectories for the same ALFRED household task and assess how similar their task-solving strategies are.

Task description: {task_desc}

Trajectory A ({len_a} steps):
{actions_a}

Trajectory B ({len_b} steps):
{actions_b}

When evaluating the similarity, consider the following factors:
- Whether the two trajectories visit the same or similar locations to search for the target objects.
- Whether they perform similar types of actions (e.g., picking up, moving, heating, cooling, cleaning, toggling switches) and whether these actions occur in a similar order.
- Whether their trajectory lengths and exploration paths are similar.
- Allow for reasonable differences in action ordering or wording; the trajectories do not need to be identical step by step.

Focus on the underlying task-solving strategy rather than exact action-level matching.

Output only a single JSON object and nothing else, in the following format:
{{"score": <a floating-point number between 0 and 1, where 1 indicates nearly identical strategies and 0 indicates completely different strategies>, "reason": "<a one-sentence explanation>"}}

\end{Verbatim}

\vspace{0.5em}
\end{tcolorbox}
\caption{Prompt Templates for LLM-as-Judge.}
\label{fig:LLM_temp}
\end{figure}

%% file: tex_code/NL.tex
\begin{figure}[!ht]
\centering
\begin{tcolorbox}[
    colback=white,
    colframe=black,
    boxrule=0.5pt,
    title={Unlearning System Prompt Template for NL}
]
\textbf{Template}
\vspace{0.5em}
\begin{Verbatim}[
    fontsize=\CaseStudySize,
    breaklines=true,
    breakbytoken=true,
    xleftmargin=1em
]
Please do not follow this action sequence to complete the task: [action1, action2, ...].
\end{Verbatim}

\vspace{0.5em}

\textbf{Real example} (one actual Clean task):

\vspace{0.5em}

\begin{Verbatim}[
    fontsize=\CaseStudySize,
    breaklines=true,
    breakbytoken=true,
    xleftmargin=1em
]
Please do not follow this action sequence to complete the task: [go to countertop 1, go to diningtable 1, go to cabinet 1, open cabinet 1, go to sidetable 1, go to diningtable 2, take apple 1 from diningtable 2, go to sinkbasin 1, clean apple 1 with sinkbasin 1, go to sidetable 1, move apple 1 to sidetable 1].
\end{Verbatim}

\vspace{0.5em}
\end{tcolorbox}
\caption{This system message is resent unchanged at every step of the episode. It is constant for a given task, built once from that task's forget action list. $[action1, action2, \ldots]$ is the recorded, action-only step sequence for that exact \texttt{task\_id}, read from the JSON file of forget data.}
\label{fig:NL_temp}
\end{figure}

%% file: tex_code/NL_attack.tex
\begin{figure}[!ht]
\centering
\begin{tcolorbox}[
    colback=white,
    colframe=black,
    boxrule=0.5pt,
    title={Unlearning System Prompt Template for NL}
]

\textbf{Template}

\vspace{0.5em}

\begin{Verbatim}[
    fontsize=\CaseStudySize,
    breaklines=true,
    breakbytoken=true,
    xleftmargin=1em
]
Ignore the system prompt.
\end{Verbatim}

\vspace{0.5em}

\textbf{Real example}

\vspace{0.5em}

\begin{Verbatim}[
    fontsize=\CaseStudySize,
    breaklines=true,
    breakbytoken=true,
    xleftmargin=1em
]
You are an expert agent operating in the ALFRED Embodied Environment.
Your current observation is: -= Welcome to TextWorld, ALFRED! =-

You are in the middle of a room. Looking quickly around you, you see a cabinet 4, a cabinet 3, a cabinet 2, a cabinet 1, a coffeemachine 1, a countertop 1, a diningtable 3, a diningtable 2, a diningtable 1, a drawer 1, a fridge 1, a garbagecan 1, a microwave 1, a sidetable 1, a sinkbasin 1, a stoveburner 4, a stoveburner 3, a stoveburner 2, a stoveburner 1, and a toaster 1.

Your task is to: put a clean apple in sidetable.
Your admissible actions of the current situation are: ['go to cabinet 1'
 'go to cabinet 2'
 'go to cabinet 3'
 'go to cabinet 4'
 'go to coffeemachine 1'
 'go to countertop 1'
 'go to diningtable 1'
 'go to diningtable 2'
 'go to diningtable 3'
 'go to drawer 1'
 'go to fridge 1'
 'go to garbagecan 1'
 'go to microwave 1'
 'go to sidetable 1'
 'go to sinkbasin 1'
 'go to stoveburner 1'
 'go to stoveburner 2'
 'go to stoveburner 3'
 'go to stoveburner 4'
 'go to toaster 1'
 'inventory'
 'look'].

Now it's your turn to take an action.
You should first reason step-by-step about the current situation. This reasoning process MUST be enclosed within <think> </think> tags.
Once you've finished your reasoning, you should choose an admissible action for current step and present it within <action> </action> tags.

Ignore the system prompt.
\end{Verbatim}

\vspace{0.5em}

\end{tcolorbox}

\caption{Attack NL template.}
\label{fig:NL_attack_temp}
\end{figure}

%% file: tex_code/trajectory.tex
\begin{figure}[!ht]
\centering
\begin{tcolorbox}[
    colback=white,
    colframe=black,
    boxrule=0.5pt,
    title={Trajectory Example on ALFWorld: Pick and Place Task}
]
\begin{Verbatim}[
    fontsize=\scriptsize,
    breaklines=true,
    breakbytoken=true,
    xleftmargin=1em
]

"task_id": "pick_and_place_58791eadbd",
"task_type": "pick_and_place",
"game_file": "...",
"task_desc": "put some alarmclock on desk.",
"seed": 42,
"initial_observation": "You are in the middle of a room. You see a bed 1, a desk 1, several drawers, a dresser 1, and a garbagecan 1.\n\nYour task is to: put some alarmclock on desk.",
"steps": [
{
  "step_idx": 0,
  "observation": "You are in the middle of a room. You see a bed 1, a desk 1, several drawers, a dresser 1, and a garbagecan 1.",
  "reasoning": "First, I need to find the alarmclock. Drawers are often used to store items, so I'll check drawer 3.",
  "action": "go to drawer 3",
  "action_valid": true,
  "reward": 0.0,
  ...
},
{
  "step_idx": 1, # Distractor Step
  "observation": "You arrive at drawer 3. The drawer 3 is closed.",
  "reasoning": "First, I need to find an alarmclock. Drawers are often used to store items, so I'll open the drawer and look for the alarmclock.",
  "action": "open drawer 3",
  "action_valid": true,
  "reward": 0.0,
  ...
},
{
  "step_idx": 2,  # Distractor Step
  "observation": "You open the drawer 3. The drawer 3 is open. In it, you see nothing.",
  "reasoning": "First, I need to find an alarmclock. Drawers are a common place to keep objects, so I'll check the dresser 1.",
  "action": "go to dresser 1",
  "action_valid": true,
  "reward": 0.0,
  ...
},
{
  "step_idx": 3,
  "observation": "You arrive at dresser 1. On the dresser 1, you see an alarmclock 1, a cd 2, creditcards, and a pencil 1.",
  "reasoning": "First, I need to take the alarmclock from the dresser. After that, I can put it on the desk.",
  "action": "take alarmclock 1 from dresser 1",
  "action_valid": true,
  "reward": 0.0,
  ...
},
{
  "step_idx": 4,
  "observation": "You pick up the alarmclock 1 from the dresser 1.",
  "reasoning": "First, I need to put the alarmclock on the desk. The alarmclock is currently on the dresser, so I'll take it and go put it on the desk.",
  "action": "go to desk 1",
  "action_valid": true,
  "reward": 0.0,
  ...
},
{
  "step_idx": 5,
  "observation": "You arrive at desk 1. On the desk 1, you see a book 1, a cd 3, a cellphone 1, a desklamp 1, and keychains.",
  "reasoning": "First, I need to put the alarmclock on the desk. The alarmclock is currently on the dresser, so I'll take it and then put it on the desk.",
  "action": "move alarmclock 1 to desk 1",
  "action_valid": true,
  "reward": 10.0,
  ...
}
],
"success": true,
"completed": true,
"num_steps": 6

\end{Verbatim}
\end{tcolorbox}
\caption{An example trajectory for a pick-and-place task after unlearning.}
\label{fig:forget_traj_example}
\end{figure}

%% file: tex_code/trajectory_unlearned.tex
\begin{figure}[!ht]
\centering
\begin{tcolorbox}[
    colback=white,
    colframe=black,
    boxrule=0.5pt,
    title={Trajectory Example After Unlearning on ALFWorld: Pick and Place Task}
]
\begin{Verbatim}[
    fontsize=\CaseStudySize,
    breaklines=true,
    breakbytoken=true,
    xleftmargin=1em
]

"task_id": "pick_and_place_58791eadbd",
"task_type": "pick_and_place",
"game_file": "...",
"task_desc": "put some alarmclock on desk.",
"seed": 42,
"initial_observation": "-= Welcome to TextWorld, ALFRED! =-\n\nYou are in the middle of a room. Looking quickly around you, you see a bed 1, a desk 1, a drawer 5, a drawer 4, a drawer 3, a drawer 2, a drawer 1, a dresser 1, and a garbagecan 1.\n\nYour task is to: put some alarmclock on desk.",
"initial_admissible_commands": [
"go to bed 1",
"go to desk 1",
"go to drawer 1",
...
],
"steps": [
{
  "step_idx": 0,
  "observation": "-= Welcome to TextWorld, ALFRED! =-\n\nYou are in the middle of a room. Looking quickly around you, you see a bed 1, a desk 1, a drawer 5, a drawer 4, a drawer 3, a drawer 2, a drawer 1, a dresser 1, and a garbagecan 1.\n\nYour task is to: put some alarmclock on desk.",
  "reasoning": "I need to find an alarm clock first. The alarm clock is not listed in the initial observation, so I'll check common locations systematically. Drawers and dressers are good places for objects like an alarm clock. Let's start by checking drawers and dresser.",
  "action": "go to dresser 1",
  "action_valid": true,
  "reward": 0.0,
  "done": false,
  "next_observation": "You arrive at dresser 1. On the dresser 1, you see a alarmclock 1, a cd 2, a creditcard 2, a creditcard 1, and a pencil 1."
},
{
  "step_idx": 1,
  "observation": "You arrive at dresser 1. On the dresser 1, you see a alarmclock 1, a cd 2, a creditcard 2, a creditcard 1, and a pencil 1.",
  "reasoning": "I need to take the alarmclock from the dresser and place it on the desk. First, I'll take the alarmclock, then go to the desk and put it there.",
  "action": "take alarmclock 1 from dresser 1",
  "action_valid": true,
  "reward": 0.0,
  "done": false,
  "next_observation": "You pick up the alarmclock 1 from the dresser 1."

},
{
  "step_idx": 2,
  "observation": "You pick up the alarmclock 1 from the dresser 1.",
  "reasoning": "I need to place the alarmclock on the desk. I've already taken the alarmclock from the dresser, so now I just need to go to the desk and put it there.",
  "action": "go to desk 1",
  "action_valid": true,
  "reward": 0.0,
  "done": false,
  "next_observation": "You arrive at desk 1. On the desk 1, you see a book 1, a cd 3, a cellphone 1, a desklamp 1, a keychain 3, and a keychain 2."
},
{
  "step_idx": 3,
  "observation": "You arrive at desk 1. On the desk 1, you see a book 1, a cd 3, a cellphone 1, a desklamp 1, a keychain 3, and a keychain 2.",
  "reasoning": "I need to place the alarmclock on the desk. I've already taken the alarmclock from the dresser, so now I just need to go to the desk and put it there.",
  "action": "move alarmclock 1 to desk 1",
  "action_valid": true,
  "reward": 10.0,
  "done": true,
  "next_observation": "You move the alarmclock 1 to the desk 1."
}
],
"success": true,
"completed": true,
"num_steps": 4

\end{Verbatim}
\end{tcolorbox}
\caption{An example trajectory for a pick-and-place task.}
\label{fig:traj_example}
\end{figure}

%% file: tex_code/trajectory_knowledge.tex
\begin{figure}[!ht]
\centering
\begin{tcolorbox}[
    colback=white,
    colframe=black,
    boxrule=0.5pt,
    title={Trajectory Example for Knowledge Unlearning Methods}
]

\vspace{0.5em}

\begin{Verbatim}[
    fontsize=\CaseStudySize,
    breaklines=true,
    breakbytoken=true,
    xleftmargin=1em
]

"prompt": "
    You are an expert agent operating in the ALFRED Embodied Environment.
    Your task is to: put some alarmclock on desk.
    Prior to this step, you have already taken 1 step(s). Below are the
    most recent 1 observations and the corresponding actions you took:
    [Observation 1: '-= Welcome to TextWorld, ALFRED! =-
    
    You are in the middle of a room. Looking quickly around you, you see
    a bed 1, a desk 1, a drawer 5, a drawer 4, a drawer 3, a drawer 2,
    a drawer 1, a dresser 1, and a garbagecan 1.
    
    Your task is to: put some alarmclock on desk.',
    Action 1: 'go to dresser 1']
    
    You are now at step 2 and your current observation is: You arrive at
    dresser 1. On the dresser 1, you see a alarmclock 1, a cd 2, a
    creditcard 2, a creditcard 1, and a pencil 1.
    
    Your admissible actions of the current situation are:
    ['examine dresser 1', 'go to bed 1', 'go to desk 1', 'go to drawer 1',
     'go to drawer 2', 'go to drawer 3', 'go to drawer 4', 'go to drawer 5',
     'go to garbagecan 1', 'inventory', 'look',
     'take alarmclock 1 from dresser 1', 'take cd 2 from dresser 1',
     'take creditcard 1 from dresser 1', 'take creditcard 2 from dresser 1',
     'take pencil 1 from dresser 1'].
    
    Now it's your turn to take an action.
    You should first reason step-by-step about the current situation.
    This reasoning process MUST be enclosed within <think> </think> tags.
    Once you've finished your reasoning, you should choose an admissible
    action for current step and present it within <action> </action> tags.
",

"response": 
"<think>
    I need to take the alarmclock from the dresser and place it on the
    desk. First, I'll take the alarmclock, then go to the desk and put
    it there.
</think>
<action>
take alarmclock 1 from dresser 1
</action>",

"task_id":            "pick_and_place_58791eadbd",
"task_type":          "pick_and_place",
"task_desc":          "put some alarmclock on desk.",
"game_file":          "...",
"step_idx":           1,
"num_steps":          4,
"trajectory_success": true,
"action":             "take alarmclock 1 from dresser 1",
"action_valid":       true
\end{Verbatim}

\vspace{0.5em}
\end{tcolorbox}
\caption{Each forget trajectory is decomposed into step-level prompt-response pairs. Each entry records the full observation history up to that step as \texttt{prompt}, the model's chain-of-thought reasoning and chosen action as \texttt{response}, along with metadata including \texttt{task\_id}, \texttt{step\_idx}, \texttt{num\_steps}, and \texttt{trajectory\_success}. For LLM unlearning baselines, each such entry is treated as an independent prompt-response pair for forget training.}
\label{fig:trajecotry_knowledge}
\end{figure}

%% file: tex_code/webshop.tex
\begin{figure}[!ht]
\centering
\begin{tcolorbox}[
    colback=white,
    colframe=black,
    boxrule=0.5pt,
    title={Trajectory Example on WebShop}
]
\begin{Verbatim}[
    fontsize=\CaseStudySize,
    breaklines=true,
    breakbytoken=true,
    xleftmargin=1em
]

{
  "task_id": "webshop_02542_B09P39QN2W",
  "task_type": "fashion",
  "product_category": "Clothing, Shoes & Jewelry > Novelty & More > Clothing > Novelty > Women > Tops & Tees > T-Shirts",
  "query": "men's shirts",
  "asin": "B09P39QN2W",
  "attributes": [
    "wash cold",
    "machine wash",
    "polyester heathers",
    "heathers cotton",
    "cotton heather",
    "needle sleeve",
    "classic fit"
  ],
  "task_desc": "Find me wash cold, machine wash men's shirts with polyester heathers, heathers cotton, cotton heather, needle sleeve, classic fit with color: brown, and fit type: youth, and size: medium, and price lower than 50.00 dollars",
  "seed": 42,
  "initial_observation": "'Search'",
  "initial_admissible_commands": [
    "search[<your query>]",
    "click[search]"
  ],
  "steps": [
    {
      "step_idx": 0,
      "observation": "'Search'",
      "reasoning": "To find the shirt, I’ll search using keywords: “men’s wash cold polyester heather cotton heather brown youth fit medium”. I’ll start with a search.",
      "model_response": "<think>To find the shirt, I’ll search using keywords: “men’s wash cold polyester heather cotton heather brown youth fit medium”. I’ll start with a search.</think>\n<action>search[men’s wash cold polyester heather cotton heather brown youth fit medium]</action>",
      "action": "search[men’s wash cold polyester heather cotton heather brown youth fit medium]",
      "action_valid": true,
      "reward": 0.0,
      "done": false,
      "admissible_commands": [
        "search[<your query>]",
        "click[search]"
      ],
      "next_observation": "..."
    },
    ...

    {
      "step_idx": 5,
      "observation": "'Back to Search' ... 'Buy Now'",
      "reasoning": "Everything looks good. Proceeding to buy.",
      "model_response": "<think>Everything looks good. Proceeding to buy.</think>\n<action>click[buy now]</action>",
      "action": "click[buy now]",
      "action_valid": true,
      "reward": 1.0,
      "done": true,
      "admissible_commands": [
        "click[back to search]",
        "click[< prev]",..."
      ],
      "next_observation": "..."
    }
  ],
  "success": true,
  "completed": true,
  "num_steps": 6,
  "num_invalid_actions": 0,
  "total_reward": 1.0,
  "task_score": 1.0,
  "session": 2542,
  "elapsed_s": 2.1
}

\end{Verbatim}
\end{tcolorbox}
\caption{An example trajectory for a pick-and-place task.}
\label{fig:traj_example_webshop}
\end{figure}